 %%%%%%%%%%%%%%%%%%%%%%%%%%%%%%%%%%%%%%%%%%%%%%%%%%%%%%%%%%%%%%%%%%
%%%%%%%% ICML 2013 EXAMPLE LATEX SUBMISSION FILE %%%%%%%%%%%%%%%%%
%%%%%%%%%%%%%%%%%%%%%%%%%%%%%%%%%%%%%%%%%%%%%%%%%%%%%%%%%%%%%%%%%%

% Use the following line _only_ if you're still using LaTeX 2.09.
%\documentstyle[icml2013,epsf,natbib]{article}
% If you rely on Latex2e packages, like most moden people use this:
\documentclass{article}

% For figures
\usepackage{graphicx} % more modern

% For citations
\usepackage{natbib}
\usepackage{times}
% For algorithms   
\usepackage{algorithm}
\usepackage{algorithmic}
\usepackage[noend]{algpseudocode}
\usepackage{subcaption}
\usepackage{amsmath}
\usepackage{amsfonts}
\usepackage{latexsym}
\usepackage{xspace}
\usepackage{xcolor}
\usepackage{dsfont}
\usepackage{booktabs}
\usepackage{siunitx} 
% As of 2011, we use the hyperref package to produce hyperlinks in the
% resulting PDF.  If this breaks your system, please commend out the
% following usepackage line and replace \usepackage{icml2013} with
% \usepackage[nohyperref]{icml2013} above.
\usepackage{hyperref}

\newtheorem{theorem}{Theorem}[section]
\newtheorem{lemma}[theorem]{Lemma}
\newtheorem{proposition}[theorem]{Proposition}

\newtheorem{definition}[theorem]{Definition}

\newenvironment{proof}{\par\noindent\textit{Proof.}}{$\Box$\par\bigskip\par}

%\input{defns}
% Packages hyperref and algorithmic misbehave sometimes.  We can fix
% this with the following command.

\newcommand{\opt}{{\operatorname{OPT}}}
\newcommand{\bbRp}{{\mathbb{R}_{\ge 0}}}

\newcommand{\cA}{\mathcal{A}}
\newcommand{\RALG}{\textsc{PRo}\xspace}
\newcommand{\GRALG}{\textsc{PRo-Gr}\xspace}
\newcommand{\SRALG}{\textsc{PRo-St}\xspace}
\newcommand{\SAT}{\textsc{Saturate}\xspace}
\newcommand{\THRESH}{\textsc{Thresholding-Greedy}\xspace}
\newcommand{\STOCHG}{\textsc{Stochastic-Greedy}\xspace}
\newcommand{\LGREEDY}{\textsc{Lazy-Greedy}\xspace}
\newcommand{\OSU}{\textsc{OSU}\xspace}
\newcommand{\GREEDY}{\textsc{Greedy}\xspace}

\newcommand{\ceillogtau}{{\lceil \log{\tau} \rceil}}
\newcommand\given[1][]{\:#1\vert\:}

\newcommand{\alphan}{\alpha_{\mathrm{next}}}

\DeclareMathOperator*{\argmin}{argmin}
\DeclareMathOperator*{\argmax}{argmax}
% Employ the following version of the ``usepackage'' statement for
% submitting the draft version of the paper for review.  This will set
% the note in the first column to ``Under review.  Do not distribute.''
\usepackage[accepted]{icml2017} 
% Employ this version of the ``usepackage'' statement after the paper has
% been accepted, when creating the final version.  This will set the
% note in the first column to ``Proceedings of the...''
% \usepackage[accepted]{icml2013}

% The \icmltitle you define below is probably too long as a header.
% Therefore, a short form for the running title is supplied here:
\icmltitlerunning{Robust Submodular Maximization: A Non-Uniform Partitioning Approach}

\begin{document} 

\twocolumn[
\icmltitle{Robust Submodular Maximization: \\ A Non-Uniform Partitioning Approach}

% It is OKAY to include author information, even for blind
% submissions: the style file will automatically remove it for you
% unless you've provided the [accepted] option to the icml2017
% package.

% list of affiliations. the first argument should be a (short)
% identifier you will use later to specify author affiliations
% Academic affiliations should list Department, University, City, Region, Country
% Industry affiliations should list Company, City, Region, Country

% you can specify symbols, otherwise they are numbered in order
% ideally, you should not use this facility. affiliations will be numb	ered
% in order of appearance and this is the preferred way.
\icmlsetsymbol{equal}{*}

\begin{icmlauthorlist}
% \icmlauthor{Aeiau Zzzz}{equal,to}
%\icmlauthor{Bauiu C.~Yyyy}{equal,to,goo}
%\icmlauthor{Cieua Vvvvv}{goo}
%\icmlauthor{Iaesut Saoeu}{ed}
%\icmlauthor{Fiuea Rrrr}{to}
%\icmlauthor{Tateu H.~Yasehe}{ed,to,goo} 
%\icmlauthor{Aaoeu Iasoh}{goo}
%\icmlauthor{Buiui Eueu}{ed}
%\icmlauthor{Aeuia Zzzz}{ed}
%\icmlauthor{Bieea C.~Yyyy}{to,goo}
%\icmlauthor{Teoau Xxxx}{ed}
%\icmlauthor{Eee Pppp}{ed}
\icmlauthor{Ilija Bogunovic}{to}
\icmlauthor{Slobodan Mitrovi\'c}{other}
\icmlauthor{Jonathan Scarlett}{to}
\icmlauthor{Volkan Cevher}{to}
\end{icmlauthorlist}

%\icmlaffiliation{to}{
%Laboratory for Information and Inference Systems (LIONS),
%\'Ecole Polytechnique F\'ed\'erale de Lausanne (EPFL), Switzerland
%}

\icmlaffiliation{to}{LIONS, EPFL, Switzerland}
\icmlaffiliation{other}{LTHC, EPFL, Switzerland}

% \icmlcorrespondingauthor{\{ ilija.bogunovic, slobodan.mitrovic, jonathan.scarlett, volkan.cevher\} @epfl.ch}
\icmlcorrespondingauthor{Ilija Bogunovic}{ilija.bogunovic@epfl.ch}
\icmlcorrespondingauthor{Slobodan Mitrovi\'c}{slobodan.mitrovic@epfl.ch}
\icmlcorrespondingauthor{Jonathan Scarlett}{jonathan.scarlett@epfl.ch}
\icmlcorrespondingauthor{Volkan Cevher}{volkan.cevher@epfl.ch}

%
% \icmlaffiliation{to}{University of Torontoland, Torontoland, Canada}
%\icmlaffiliation{goo}{Googol ShallowMind, New London, Michigan, USA}
%\icmlaffiliation{ed}{University of Edenborrow, Edenborrow, United Kingdom}
%
% \icmlcorrespondingauthor{Cieua Vvvvv}{c.vvvvv@googol.com}
%\icmlcorrespondingauthor{Eee Pppp}{ep@eden.co.uk}

% You may provide any keywords that you 
% find helpful for describing your paper; these are used to populate 
% the "keywords" metadata in the PDF but will not be shown in the document
\icmlkeywords{Submodular maximization, robust optimization, influence maximization, personalization}

\vskip 0.3in
]

\printAffiliationsAndNotice%{\icmlEqualContribution}

\begin{abstract} 
We study the problem of maximizing a monotone submodular function subject to a cardinality constraint $k$, with the added twist that a number of items $\tau$ from the returned set may be removed. We focus on the worst-case setting considered in \cite{orlin2016robust}, in which a constant-factor approximation guarantee was given for $\tau = o(\sqrt{k})$.  In this paper, we solve a key open problem raised therein, presenting a new Partitioned Robust ($\RALG$) submodular maximization algorithm  that achieves the same guarantee for more general $\tau = o(k)$.  Our algorithm constructs partitions consisting of buckets with exponentially increasing sizes, and applies standard submodular optimization subroutines on the buckets in order to construct the robust solution. We numerically demonstrate the performance of $\RALG$ in data summarization and influence maximization, demonstrating gains over both the greedy algorithm and the algorithm of \cite{orlin2016robust}.
\end{abstract} 
\vspace*{1ex}
%!TEX root = 000-faros.tex
\section{Introduction}

Discrete optimization problems arise frequently in machine learning, and are often NP-hard even to approximate. In the case of a set function exhibiting {\em submodularity}, one can efficiently perform maximization subject to cardinality constraints with a $\big(1 - \frac{1}{e}\big)$-factor approximation guarantee.  Applications include influence maximization~\cite{kempe2003maximizing}, document summarization~\cite{lin2011class}, sensor placement~\cite{krause2007near}, and active learning~\cite{krause2012submodular}, just to name a few.

In many applications of interest, one requires {\em robustness} in the solution set returned by the algorithm, in the sense that the objective value degrades as little as possible when some elements of the set are removed.  For instance, (i) in influence maximization problems, a subset of the chosen users may decide not to spread the word about a product; (ii) in summarization problems, a user may choose to remove some items from the summary due to their personal preferences; (iii) in the problem of sensor placement for outbreak detection, some of the sensors might fail. 

In situations where one does not have a reasonable prior distribution on the elements removed, or where one requires robustness guarantees with a high level of certainty, protecting against worst-case removals becomes important. This setting results in the {\em robust submodular function maximization} problem, in which we seek to return a set of cardinality $k$ that is robust with respect to the worst-case removal of $\tau$ elements. 

The robust problem formulation was first introduced in \cite{krause2008robust}, and was further studied in \cite{orlin2016robust}.  In fact, \cite{krause2008robust} considers a more general formulation where a constant-factor approximation guarantee is impossible in general, but shows that one can match the optimal (robust) objective value for a given set size at the cost of returning a set whose size is larger by a logarithmic factor.  In contrast, \cite{orlin2016robust} designs an algorithm that obtains the first constant-factor approximation guarantee to the above problem when $\tau = o(\sqrt{k})$.  A key difference between the two frameworks is that the algorithm complexity is exponential in $\tau$ in \cite{krause2008robust}, whereas the algorithm of \cite{orlin2016robust} runs in polynomial time.

\begin{table*}[htb]
 	\vspace*{2ex}
    \begin{tabular}{ccccc} \toprule
        {Algorithm} & {Max. Robustness} & {Cardinality} & {Oracle Evals.} & {Approx.} \\ \midrule
        \SAT \textsc{\cite{krause2008robust}} & Arbitrary & $k(1 + \Theta(\log(\tau k \log n)))$ & exponential in $\tau$ & 1.0  \\
        \OSU \textsc{\cite{orlin2016robust}}  & $o(\sqrt{k})$  & $k$ & $\mathcal{O}(nk)$  & 0.387   \\  \midrule
        \textsc{\RALG-\GREEDY (Ours)}  & $o(k)$   & $k$ & $\mathcal{O}(nk)$   & 0.387  \\  \bottomrule
    \end{tabular}
    \caption{Algorithms for robust monotone submodular optimization with a cardinality constraint. The proposed algorithm is efficient and allows for greater robustness.}\label{results}
    \vspace*{2ex}
\end{table*}

\textbf{Contributions.} In this paper, we solve a key open problem posed in \cite{orlin2016robust}, namely, whether a constant-factor approximation guarantee is possible for general $\tau = o(k)$, as opposed to only $\tau = o(\sqrt{k})$.  We answer this question in the affirmative, providing a new Partitioned Robust ($\RALG$) submodular maximization algorithm that attains a constant-factor approximation guarantee; see Table~\ref{results} for comparison of different algorithms for robust monotone submodular optimization with a cardinality constraint.

Achieving this result requires novelty both in the algorithm and its mathematical analysis:  While our algorithm bears some similarity to that of \cite{orlin2016robust}, it uses a novel structure in which the constructed set is arranged into partitions consisting of buckets whose sizes increase exponentially with the partition index.  A key step in our analysis provides a recursive relationship between the objective values attained by buckets appearing in adjacent partitions. 

In addition to the above contributions, we provide the first empirical study beyond what is demonstrated for $\tau = 1$ in \cite{krause2008robust}.  We demonstrate several scenarios in which our algorithm outperforms both the greedy algorithm and the algorithm of \cite{orlin2016robust}.% in data summarization, influence maximization, and materials selection.

%\todo{Other things to do:
%\begin{itemize}
%    \item Consider the algorithm name: Bucketed RoS? Partitioned RoS? Geometric/Exponentially-structured?
%    \item Notation: $\log$ is base 2
%    \item Probably assume $\tau \ge 2$ for convenience ($\tau = 1$ already covered by Orlin)
%    \item Tidy up the references -- shorter conference names, no URLs or other details that aren't needed 
%\end{itemize}}
%
%Introduction paragraphs:
%\begin{itemize}
%	\item Submodularity and applications. See Horizontally Scalable Submodular Maximization, first paragraph. Cite Bilmes document summarization paper. Simple greedy is near optimal.
%	\item Paragraph on robust submodular problem and applications. The problem of the existing approaches i.e. an efficient (not like Saturate) algorithms that allow for increased robustness (greater robustness than Orlin) are needed.
%	\item \textbf{Main contributions.} Greater robustness. Constant factor approximation. First empirical study.
%\end{itemize}

%!TEX root = 000-faros.tex
\section{Problem Statement}
Let $V$ be a ground set with cardinality $|V|=n$, and let $f:2^V \to \bbRp$ be a set function defined on $V$. The function $f$ is said to be \emph{submodular} if for any sets $X \subseteq Y \subseteq V$ and any element $e \in V \setminus Y$, it holds that
\[
    f(X \cup \lbrace e \rbrace) - f(X) \ge f(Y \cup \lbrace e \rbrace) - f(Y).
\]
We use the following notation to denote the marginal gain in the function value due to adding the elements of a set $Y$ to the set $X$:
\[f(Y|X):= f(X \cup Y) - f(X). \] 
In the case that $Y$ is a singleton of the form $\{e\}$, we adopt the shorthand $f(e|X)$. We say that $f$ is \emph{monotone} if for any sets $X \subseteq Y \subseteq V$ we have $f(X) \leq f(Y)$, and \emph{normalized} if $f(\emptyset) = 0$.

The problem of maximizing a normalized monotone submodular function subject to a cardinality constraint, i.e.,
\begin{equation}
    \label{eq:card_prblm}
    \max_{S \subseteq V, |S| \leq k} f(S),
\end{equation}
has been studied extensively. A celebrated result of~\cite{nemhauser1978analysis} shows that a simple greedy algorithm that starts with an empty set and then iteratively adds elements with highest marginal gain provides a $(1 - 1/e)$-approximation.

In this paper, we consider the following \emph{robust} version of \eqref{eq:card_prblm}, introduced in~\cite{krause2008robust}:
\begin{equation}
    \label{eq:robust-card-prblm}
    \max_{S \subseteq V, |S| \leq k} \  \min_{Z \subseteq S, |Z|\leq{\tau}}\  f(S\setminus Z)
\end{equation}
We refer to $\tau$ as the robustness parameter, representing the size of the subset $Z$ that is removed from the selected set $S$. Our goal is to find a set $S$ such that it is robust upon the worst possible removal of $\tau$ elements, i.e., after the removal, the objective value should remain as large as possible. For $\tau=0$, our problem reduces to Problem~\eqref{eq:card_prblm}.
%Our goal is to find a set $S$ such that upon the worst possible removal of $\tau$ elements from $S$ our objective value remains as large as possible. 

The greedy algorithm, which is near-optimal for Problem~\eqref{eq:card_prblm} can perform arbitrarily badly for Problem~\eqref{eq:robust-card-prblm}.  As an elementary example, let us fix $\epsilon \in [0, n-1)$ and $n \geq 0$, and consider the non-negative monotone submodular function given in Table~\ref{eq:counterexample}. For $k=2$, the greedy algorithm selects $\lbrace s_1, s_2 \rbrace$. The set that maximizes $\min_{s \in S}f(S \setminus {s})$ (i.e., $\tau=1$) is $\lbrace {s_1, s_3} \rbrace$. For this set, $\min_{s \in \lbrace s_1, s_2 \rbrace}f(\lbrace s_1,s_2 \rbrace \setminus {s}) = n-1$, while for the greedy set the robust objective value is $\epsilon$. As a result, the greedy algorithm can perform arbitrarily worse. 

In our experiments on real-world data sets (see Section~\ref{sec:experiments}), we further explore the empirical behavior of the greedy solution in the robust setting. Among other things, we observe that the greedy solution tends to be less robust when  the objective value largely depends on the first few elements selected by the greedy rule.

\begin{table}
    \vspace*{2ex}
    \centering
    \begin{tabular}{ccc} \toprule
        {$S$} & {$f(S)$} & {$\min_{s \in S}f(S \setminus {s})$}  \\ \midrule
        $\emptyset$ & $0$ & $0$  \\
        $\lbrace s_1 \rbrace$  & $n$   & $0$  \\
        $\lbrace s_2 \rbrace$  & $\epsilon$   & $0$ \\  
        $\lbrace s_3 \rbrace$  & $n-1$ & $0$ \\
        $\lbrace s_1, s_2 \rbrace$  & $n + \epsilon$ & $\epsilon$ \\
        $\lbrace s_1, s_3 \rbrace$  & $n$ & $\mathbf{n-1}$ \\
        $\lbrace s_2, s_3 \rbrace$  & $n$ & $\epsilon$ \\
        \bottomrule
    \end{tabular}
    \caption{Function $f$ used to demonstrate that $\GREEDY$ can perform arbitrarily badly.}
    \vspace*{2ex}
    \label{eq:counterexample}
\end{table}

\textbf{Related work.} \cite{krause2008robust} introduces the following generalization of \eqref{eq:robust-card-prblm}:
\begin{equation}\label{eq: robustandreas}
    \max_{S \subseteq V, |S| \leq k}\; \min_{i \in \lbrace1,\cdots, n\rbrace} f_i(S),
\end{equation}
where $f_i$ are normalized monotone submodular functions. The authors show that this problem is inapproximable in general, but propose an algorithm $\SAT$ which, when applied to \eqref{eq:robust-card-prblm}, returns a set of size $ k(1 + \Theta (\log(\tau k \log n)))$ whose robust objective is at least as good as the optimal size-$k$ set. \SAT requires a number of function evaluations that is exponential in $\tau$, making it very expensive to run even for small values.  The work of \cite{powersconstrained} considers the same problem for different types of submodular constraints.

Recently, robust versions of submodular maximization have been applied to influence maximization. In \cite{he2016robust}, the formulation \eqref{eq: robustandreas} is used to optimize a worst-case approximation ratio. The confidence interval setting is considered in \cite{chen2016robust}, where two runs of the \GREEDY algorithm (one pessimistic and one optimistic) are used to optimize the same ratio. By leveraging connections to continuous submodular optimization, \cite{staibrobust} studies a related continuous robust budget allocation problem. % is considered in .

\cite{orlin2016robust} considers the formulation in \eqref{eq:robust-card-prblm}, and provides the first constant $0.387$-factor approximation result, valid for $\tau = o( \sqrt{k} )$. The algorithm proposed therein, which we refer to via the authors’ surnames as \OSU, uses the greedy algorithm (henceforth referred to as \GREEDY) as a sub-routine $\tau + 1$ times.  On each iteration, \GREEDY is applied on the elements that are not yet selected on previous iterations, with these previously-selected elements ignored in the objective function.  In the first $\tau$ runs, each solution is of size $\tau \log k$, while in the last run, the solution is of size $k - \tau^2 \log k$. The union of all the obtained disjoint solutions leads to the final solution set. % of size $k$.  

% The authors \cite{orlin2016robust} also posed the open problem of whether a constant factor approximation is possible for more general $\tau = o(k)$, and in the present paper we answer this question in the affirmative. We provide the preview of our result and  a summary of existing results in Table~\ref{results}.

% \textbf{VC: not sure if this paragraph is necessary} Another related work is \cite{golovin2011adaptive}, which considers submodular optimization for robustness in the {\em average-case} sense, which is a significantly different problem.  We also briefly mention that robust formulations are common in other types of optimization problems, with a particularly prominent example being that of optimizing a parametrized continuous function subject to worst-case robustness within a convex set of parameters (e.g., \cite{ben2009robust} and \cite{bertsimas2011theory}).
\section{Applications} \label{sec:apps}
In this section, we provide several examples of applications where the robustness of the solution is favorable. The objective functions in these applications are non-negative, monotone and submodular, and are used in our numerical experiments in Section \ref{sec:experiments}. % \textbf{VC: not sure if these --> are necessary -- repeat of intro} 

\textbf{Robust influence maximization.}
The goal in the influence maximization problem is to find a set of $k$ nodes (i.e., a targeted set) in a network that maximizes some measure of influence. For example, this problem appears in viral marketing, where companies wish to spread the word of a new product by targeting the most influential individuals in a social network.  Due to poor incentives or dissatisfaction with the product, for instance, some of the users from the targeted set might make the decision not to spread the word about the product.

For many of the existing diffusion models used in the literature (e.g., see \cite{kempe2003maximizing}), given the targeted set $S$, the expected number of influenced nodes at the end of the diffusion process is a monotone and submodular function of $S$~\cite{he2016robust}.  For simplicity, we consider a basic model in which all of the neighbors of the users in $S$ become influenced, as well as those in $S$ itself. 

More formally, we are given a graph $G = (V, E)$, where $V$ stands for nodes and $E$ are the edges. For a set $S$, let $\mathcal{N}(S)$ denote all of its neighboring nodes. The goal is to solve the robust \emph{dominating set problem}, i.e., to find a set of nodes $S$ of size $k$ that maximizes 
\begin{equation}
    \min_{|R_S| \leq \tau, R_S \subseteq S} |(S \setminus R_S) \cup \mathcal{N}(S \setminus R_S)|, \label{eq:domset}
\end{equation}
where $R_S \subseteq S$ represents the users that decide not to spread the word. % This setting is different than the classical one where all the targeted nodes $S$ are considered to be already influenced. 
The non-robust version of this objective function has previously been considered in several different works, such as \cite{mirzasoleiman2015distributed} and~\cite{norouzi2016efficient}. 

% Unlike the classical influence maximization setting, where it is assumed that every node in the set of targeted nodes will become influenced and will start the diffusion process which will in turn result in some expected number of influenced nodes, here, we assume that some number of nodes in the targeted set will neither become influenced or will start the diffusion process.  

\textbf{Robust personalized image summarization.}
In the personalized image summarization problem, a user has a collection of images, and the goal is to find $k$ images that are representative of the collection.   

After being presented with a solution, the user might decide to remove a certain number of images from the representative set due to various reasons (e.g., bad lighting, motion blur, etc.). Hence, our goal is to find a set of images that remain good representatives of the collection even after the removal of some number of them.

One popular way of finding a representative set in a massive dataset is via exemplar based clustering, i.e., by minimizing the sum of pairwise dissimilarities between the exemplars $S$ and the elements of the data set $V$. This problem can be posed as a submodular maximization problem subject to a cardinality constraint; \textit{cf.}, \cite{lucic16horizontally}. 

Here, we are interested in solving the robust summarization problem, i.e., we want to find a set of images $S$ of size $k$ that maximizes
\begin{align}
    \min_{|R_S| \leq \tau, R_S \subseteq S} f(\lbrace e_0 \rbrace) - f((S \setminus R_S) \cup \lbrace e_0 \rbrace), \label{eq:exemplar}
\end{align}
where $e_0$ is a reference element and $f(S) = \frac{1}{|V|} \sum_{v \in V} \min_{s \in S} d(s,v)$ is the $k$-\emph{medoid} loss function, and where $d(s,v)$ measures the dissimilarity between images $s$ and $v$.

Further potential applications not covered here include robust sensor placement~\cite{krause2008robust}, robust protection of networks~\cite{bogunovic2012robust}, and robust feature selection~\cite{globerson2006nightmare}.
%!TEX root = 000-faros.tex

\section{Algorithm and its Guarantees}

\subsection{The algorithm}

Our algorithm, which we call the Partitioned Robust ($\RALG$) submodular maximization algorithm, is presented in Algorithm~\ref{algorithm:ralg}. As the input, we require a non-negative monotone submodular function $f:2^V \to \bbRp$, the ground set of elements $V$, and an optimization subroutine $\mathcal{A}$. The subroutine $\mathcal{A}(k',V')$ takes a cardinality constraint $k'$ and a ground set of elements $V'$. Below, we describe the properties of $\mathcal{A}$ that are used to obtain approximation guarantees.

The output of the algorithm is a set $S \subseteq V$ of size $k$ that is robust against the worst-case removal of $\tau$ elements. The returned set consists of two sets $S_0$ and $S_1$, illustrated in Figure~\ref{fig-S}.  $S_1$ is obtained by running the subroutine $\cA$ on $V \setminus S_0$ (i.e., ignoring the elements already placed into $S_0$), and is of size $k - |S_0|$. 

%\begin{algorithm}
%    \label{algorithm:ralg}
% 	\caption{The FAst RObust Submodular greedy algorithm ($\RALG$)}
%	\begin{algorithmic}[1]
%		\REQUIRE  Set $V$, $k$, $\tau$, $\eta \in \mathbb{N}_+$, $\beta \text{-nice algorithm } \mathcal{A}$
%		\ENSURE Set $S \subseteq V$ such that $|S| \leq k$
%     	    \STATE $S_0, S_1 \gets \emptyset$
%    	    \FOR {$ i \gets 0 \textbf{ to } \lceil \log \tau \rceil$}
%    	        \FOR {$j \gets 1 \textbf{ to } \lceil \tau / 2^i \rceil$}
%    	            \STATE $B_j \gets \mathcal{A}\ (2^i \eta,\ (V\setminus S_0))$
%	                \STATE $S_0 \gets S_0 \cup B_j$
%	            \ENDFOR
%     	    \ENDFOR 
%     	\STATE $S_1 \gets \mathcal{A}\ (k - |S_0|,\ (V \setminus S_0))$
%     	  %  \While {$|S_1| < k - |S_0|$}
%     	  %      \State $e \gets \argmax_{e \in V \setminus {(S_0 \cup S_1)}} \big\lbrace f(S_1 \cup \lbrace e \rbrace) - f(S_1) \big\rbrace$
%     	  %      \State $S_1 \gets S_1 \cup \lbrace e \rbrace$
%     	  %  \EndWhile
%    \STATE {$S \gets S_0 \cup S_1$}\\
%    \STATE {\bf Return} $S$ 
%	\end{algorithmic}
%\end{algorithm}

We refer to the set $S_0$ as the robust part of the solution set $S$. It consists of $\ceillogtau + 1$ partitions, where every partition $i \in \lbrace 0, \cdots ,\ceillogtau \rbrace$ consists of $\lceil \tau / 2^i \rceil$ buckets $B_j$, $j \in \lbrace 1, \cdots, \lceil \tau / 2^i \rceil \rbrace$. In partition $i$, every bucket contains $2^i \eta$ elements, where $\eta \in \mathbb{N}_+$ is a parameter that is arbitrary for now; we use $\eta = \log^2 k$ in our asymptotic theory, but our numerical studies indicate that even $\eta =1$ works well in practice. Each bucket $B_j$ is created afresh by using the subroutine $\cA$ on $V \setminus S_{0,\mathrm{prev}}$, where $S_{0,\mathrm{prev}}$ contains all elements belonging to the previous buckets.

The following proposition bounds the cardinality of $S_0$, and is proved in the supplementary material.

\begin{proposition}
\label{proposition:S_0_size}
Fix $k \geq \tau$ and $\eta \in \mathbb{N}_+$.  The size of the robust part $S_0$ constructed in Algorithm~\ref{algorithm:ralg} is
\begin{equation}
    |S_0|\ = \ \sum_{i=0}^{\ceillogtau} \lceil \tau / 2^i \rceil 2^i \eta  \ \le \ 3\eta \tau (\log k + 2) \nonumber.
\end{equation}
\end{proposition}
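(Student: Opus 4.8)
The stated equality is simply a restatement of the construction: partition $i$ consists of $\lceil \tau/2^i\rceil$ buckets, each containing exactly $2^i\eta$ elements, and distinct buckets are disjoint (each new bucket is drawn from $V$ minus all previous buckets), so $|S_0| = \sum_{i=0}^{\ceillogtau}\lceil\tau/2^i\rceil\, 2^i\eta$. All the content is in the inequality, which I would obtain by a two-step estimate: relax the ceilings, then sum a geometric series.

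For the first step, use $\lceil \tau/2^i\rceil \le \tau/2^i + 1$, so that $\lceil\tau/2^i\rceil\,2^i\eta \le \tau\eta + 2^i\eta$. Summing over $i=0,\dots,\ceillogtau$ gives
\[
|S_0| \;\le\; (\ceillogtau+1)\,\tau\eta \;+\; \eta\sum_{i=0}^{\ceillogtau}2^i \;=\; (\ceillogtau+1)\,\tau\eta \;+\; \eta\bigl(2^{\ceillogtau+1}-1\bigr).
\]
For the second step, bound the two pieces separately: since $\ceillogtau \le \log\tau + 1$ and $\tau \le k$, we get $\ceillogtau + 1 \le \log k + 2$; and $2^{\ceillogtau+1}-1 \le 2^{\ceillogtau+1} \le 4\tau$. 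Hence $|S_0| \le (\log k + 2)\tau\eta + 4\tau\eta$, and because $4 \le 2(\log k + 2)$ whenever $k\ge 1$, the right-hand side is at most $3\eta\tau(\log k+2)$.

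There is no genuine obstacle here; the computation is routine. The only points needing a little care are keeping each ceiling relaxation in the direction that preserves the inequality, passing from $\log\tau$ to $\log k$ via $\tau\le k$, and observing that the additive $4\tau\eta$ left over from the geometric sum is absorbed into the leading term precisely because the constant in front of $\log k + 2$ is taken to be $3$. (Here $\log$ is base $2$, matching the fact that $\ceillogtau+1$ partitions with bucket counts $\lceil\tau/2^i\rceil$ suffice to account for all $\tau$ possible removals, the last partition being a single bucket of size at least $\tau\eta$.)
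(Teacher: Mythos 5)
Your proof is correct and follows essentially the same route as the paper's: relax each ceiling via $\lceil \tau/2^i\rceil \le \tau/2^i + 1$, control the resulting geometric part by a constant multiple of $\tau$ (the paper bounds each summand by $\eta(\tau + 2^{\ceillogtau}) \le 3\eta\tau$ and multiplies by the number of partitions, while you sum the series exactly and absorb the leftover $4\tau\eta$), and finish with $\ceillogtau + 1 \le \log k + 2$ using $\tau \le k$. The minor difference in how the geometric sum is handled does not change the argument or the constant.
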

% This proposition follows from the condition $k\geq \tau$, and the fact that every term in the sum can be upper bounded by $(\tau + 2^{\ceillogtau + 1})\eta$. \todo{Full steps in appendix?}

This proposition reveals that the feasible values of $\tau$ (i.e., those with $|S_0| \le k$) can be as high as $O\big( \frac{k}{\eta \tau}\big)$.  We will later set $\eta = O(\log^2 k)$, thus permitting all $\tau = o(k)$ up to a few logarithmic factors.  In contrast, we recall that the algorithm \OSU proposed in \cite{orlin2016robust} adopts a simpler approach where a robust set is used consisting of $\tau$ buckets of equal size $\tau \log k$, thereby only permitting the scaling $\tau = o(\sqrt{k})$.

\begin{algorithm}
\vspace{1ex}
    \caption{Partitioned Robust Submodular optimization algorithm ($\RALG$) \label{algorithm:ralg}}
    \begin{algorithmic}[1]
        \Require  Set $V$, $k$, $\tau$, $\eta \in \mathbb{N}_+$, $\text{algorithm } \mathcal{A}$
        \Ensure Set $S \subseteq V$ such that $|S| \leq k$
            \State $S_0, S_1 \gets \emptyset$
            \For {$ i \gets 0 \textbf{ to } \lceil \log \tau \rceil$}
                \For {$j \gets 1 \textbf{ to } \lceil \tau / 2^i \rceil$}
                    \State $B_j \gets \mathcal{A}\ (2^i \eta,\ (V\setminus S_0))$
                    \State $S_0 \gets S_0 \cup B_j$
                \EndFor
            \EndFor 
        \State $S_1 \gets \mathcal{A}\ (k - |S_0|,\ (V \setminus S_0))$
          %  \While {$|S_1| < k - |S_0|$}
          %      \State $e \gets \argmax_{e \in V \setminus {(S_0 \cup S_1)}} \big\lbrace f(S_1 \cup \lbrace e \rbrace) - f(S_1) \big\rbrace$
          %      \State $S_1 \gets S_1 \cup \lbrace e \rbrace$
          %  \EndWhile
    \State {$S \gets S_0 \cup S_1$}\\
    \Return $S$ 
    \end{algorithmic}
\vspace{1ex}
\end{algorithm}
\vspace{1ex}
% Beyond the advantage of permitting larger $\tau$, the structure of \RALG can also reduce the computation.  Typically, the evaluations of the objective function become computationally more expensive as the number of selected items becomes bigger. After every bucket is created by \RALG, the current set is restarted to empty set leading to a cheaper objective function evaluations. Hence, in practice, \RALG that uses algorithm $\cA$ (e.g., \GREEDY), produces the solution set faster than $\cA$ for the same budget $k$. \todo{EDIT MORE, MENTION HOW ORLIN WORKS}

We provide the following intuition as to why \RALG succeeds despite having a smaller size for $S_0$ compared to the algorithm given in \cite{orlin2016robust}. First, by the design of the partitions, there always exists a bucket in partition $i$ that at most $2^i$ items are removed from.  The bulk of our analysis is devoted to showing that the union of these buckets yields a sufficiently high objective value.  While the earlier buckets have a smaller size, they also have a higher objective value per item due to diminishing returns, and our analysis quantifies and balances this trade-off.  Similarly, our analysis quantifies the trade-off between how much the adversary can remove from the (typically large) set $S_1$ and the robust part $S_0$.

\subsection{Subroutine and assumptions}

\RALG accepts a subroutine $\cA$ as the input. We consider a class of algorithms that satisfy the \emph{$\beta$-iterative property}, defined below.  We assume that the algorithm outputs the final set in some specific order $(v_1,\dotsc,v_k)$, and we refer to $v_i$ as the \emph{$i$-th output element}.

\begin{definition} \label{def:beta}
Consider a normalized monotone submodular set function $f$ on a ground set $V$, and an algorithm $\cA$. Given any set $T \subseteq V$ and size $k$, suppose that $\cA$ outputs an ordered set $(v_1,\dotsc,v_k)$ when applied to $T$, and define $\cA_i(T) = \{v_1,\dotsc,v_i\}$ for $i \le k$. We say that $\cA$ satisfies the \emph{$\beta$-iterative property} if
\begin{equation} \label{eq:beta-property}
    f(\cA_{i+1}(T)) - f(\cA_{i}(T)) \geq \frac{1}{\beta} \max_{v \in T} f(v|\cA_i(T)). 
\end{equation}
\end{definition}

Intuitively, \eqref{eq:beta-property} states that in every iteration, the algorithm adds an element whose marginal gain is at least a $1/\beta$ fraction of the maximum marginal. This necessarily requires that $\beta \ge 1$.

{\bf Examples.} Besides the classic greedy algorithm, which satisfies~\eqref{eq:beta-property} with  $\beta = 1$, a good candidate for our subroutine is $\THRESH$~\cite{badanidiyuru2014}, which satisfies the $\beta$-iterative property with $\beta = 1 / (1 - \epsilon)$. This decreases the number of function evaluations to $\mathcal{O}(n / \epsilon \log n / \epsilon)$.

$\STOCHG$~\cite{mirzasoleiman2015} is another potential subroutine candidate. While it is unclear whether this algorithm satisfies the $\beta$-iterative property, it requires an even smaller number of function evaluations, namely, $\mathcal{O}(n  \log 1 / \epsilon)$.  We will see in Section \ref{sec:experiments} that $\RALG$ performs well empirically when used with this subroutine.  We henceforth refer to $\RALG$ used along with its appropriate subroutine as $\RALG\text{-}\GREEDY$, $\RALG\text{-}\THRESH$, and so on.

{\bf Properties.} The following lemma generalizes a classical property of the greedy algorithm \cite{nemhauser1978analysis,krause2012submodular} to the class of algorithms satisfying the $\beta$-iterative property.  Here and throughout the paper, we use $\opt(k, V)$ to denote the following optimal set for \emph{non-robust} maximization:
\[
    \opt(k, V) \in \argmax_{S \subseteq V, |S| = k}\;  f(S),
\]

\begin{lemma}
\label{lemma:l-k-beta}
Consider a normalized monotone submodular function $f:2^V \to \bbRp$ and an algorithm $\cA(T)$, $T \subseteq V$, that satisfies the $\beta$-iterative property in \eqref{eq:beta-property}. Let $\cA_l(T)$ denote the set returned by the algorithm $\cA(T)$ after $l$ iterations. Then for all $k,l \in \mathbb{N}_+$
\begin{equation}
    f(\cA_{l}(T)) \geq \left( 1 - e^{-\frac{l}{\beta k}} \right) f(\opt(k,T)).
\end{equation}
\end{lemma}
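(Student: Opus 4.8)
The plan is to mimic the classical analysis of the greedy algorithm, substituting the $\beta$-iterative property in \eqref{eq:beta-property} for the exact greedy choice. Write $O := \opt(k,T)$ and let $\delta_i := f(O) - f(\cA_i(T))$ denote the residual gap after $i$ iterations. Since $f$ is normalized we have $\cA_0(T) = \emptyset$ and hence $\delta_0 = f(O)$. The statement then reduces to proving the geometric decay $\delta_l \le \bigl(1 - \tfrac{1}{\beta k}\bigr)^l f(O)$ and applying the elementary bound $1 - x \le e^{-x}$ with $x = l/(\beta k)$.

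The first step is to establish the per-iteration contraction $\delta_{i+1} \le \bigl(1 - \tfrac{1}{\beta k}\bigr)\delta_i$. The key observation is that $O \subseteq T$, so the maximum marginal taken over $T$ dominates the maximum marginal taken over $O$; combined with the $\beta$-iterative property this gives
\[
    f(\cA_{i+1}(T)) - f(\cA_{i}(T)) \ \ge\ \frac{1}{\beta}\max_{v\in T} f(v\mid \cA_i(T)) \ \ge\ \frac{1}{\beta}\max_{v\in O} f(v\mid \cA_i(T)).
\]
The second step bounds the right-hand side from below in terms of $\delta_i$ via the standard subadditivity-of-marginals argument: by monotonicity $f(O) - f(\cA_i(T)) \le f(O\cup\cA_i(T)) - f(\cA_i(T)) = f(O\mid \cA_i(T))$, and by submodularity $f(O\mid\cA_i(T)) \le \sum_{v\in O} f(v\mid \cA_i(T)) \le k\max_{v\in O} f(v\mid\cA_i(T))$. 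Hence $\max_{v\in O} f(v\mid\cA_i(T)) \ge \delta_i/k$, and substituting yields $\delta_i - \delta_{i+1} \ge \delta_i/(\beta k)$, i.e.\ the claimed contraction.

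The third step unrolls the recursion by induction on $l$: from $\delta_{i+1} \le (1-\tfrac{1}{\beta k})\delta_i$ and $\delta_0 = f(O)$ we get $\delta_l \le (1-\tfrac{1}{\beta k})^l f(O)$, so that $f(\cA_l(T)) = f(O) - \delta_l \ge \bigl(1 - (1-\tfrac{1}{\beta k})^l\bigr)f(O) \ge \bigl(1 - e^{-l/(\beta k)}\bigr)f(O)$, as desired.

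I do not expect a serious obstacle here; this is a routine adaptation of the $1-1/e$ analysis. The only points needing a little care are (i) the normalization $f(\emptyset)=0$, which is what seeds the recursion at $\delta_0 = f(O)$; (ii) the fact that $\opt(k,T)$ is a subset of $T$, which is what licenses passing from $\max_{v\in T}$ to $\max_{v\in O}$ in the first display; and (iii) the implicit assumption that $\cA$ runs for at least $l$ iterations on input $T$ — if $l$ exceeds the number of available elements the statement is vacuous or follows from monotonicity, since additional iterations never decrease $f(\cA_i(T))$.
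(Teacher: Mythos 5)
Your proof is correct and follows essentially the same route as the paper's: the $\beta$-iterative property combined with the max-at-least-average bound over the optimal set, submodularity, and monotonicity to get $\delta_{i+1}\le(1-\tfrac{1}{\beta k})\delta_i$, then unrolling the recursion and applying $1-x\le e^{-x}$. No gaps.
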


We will also make use of the following property, which is implied by the $\beta$-iterative property.

\begin{proposition}
\label{proposition:beta-nice-main-rule} 
Consider a submodular set function $f:2^V \to \bbRp$ and an algorithm $\cA$ that satisfies the $\beta$-iterative property for some $\beta \ge 1$. Then, for any $T \subseteq V$ and element $e \in V \setminus \cA(T)$, we have
\begin{align}
    f(e|\cA(T)) & \leq \beta \frac{f(\cA(T))}{k}. \label{eq:beta-nice-2}
\end{align}
\end{proposition}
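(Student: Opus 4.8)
The plan is to derive \eqref{eq:beta-nice-2} directly from the $\beta$-iterative property applied at the very last iteration of $\cA(T)$, i.e.\ at step $k$. Write $\cA(T) = \cA_k(T)$ and let $v_k$ be the $k$-th output element, so $\cA(T) = \cA_{k-1}(T) \cup \{v_k\}$. For any $e \in V \setminus \cA(T)$, the element $e$ is a valid candidate at iteration $k$ (it lies in $T$ — or at least, the relevant instantiation of the property is for candidates in $T$; one must check the precise meaning, but in the intended usage $V$ and $T$ coincide for the subroutine call), so the $\beta$-iterative property gives $f(\cA_k(T)) - f(\cA_{k-1}(T)) \ge \frac{1}{\beta} f(e \mid \cA_{k-1}(T))$. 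By submodularity (since $\cA_{k-1}(T) \subseteq \cA_k(T)$ and $e \notin \cA_k(T)$), $f(e \mid \cA_k(T)) \le f(e \mid \cA_{k-1}(T))$, so combining these yields $f(e \mid \cA_k(T)) \le \beta \big( f(\cA_k(T)) - f(\cA_{k-1}(T)) \big)$.

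It remains to bound the one-step increment $f(\cA_k(T)) - f(\cA_{k-1}(T))$ by $f(\cA_k(T))/k$. The natural way is to observe that the increments $\delta_i := f(\cA_i(T)) - f(\cA_{i-1}(T))$ for $i = 1,\dotsc,k$ are nonincreasing: indeed $\delta_{i+1} = f(v_{i+1}\mid \cA_i(T)) \le \frac{1}{\beta} \cdot \beta\,\delta_i$? That is not quite immediate, so instead I would argue more carefully. The $\beta$-iterative property says $\delta_{i+1} \ge \frac{1}{\beta}\max_{v\in T} f(v\mid \cA_i(T)) \ge \frac{1}{\beta} f(v_i \mid \cA_i(T))$, which is the wrong direction. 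So monotonicity of the $\delta_i$ is not the route. Instead, I would use submodularity directly: for each $i < k$, $\delta_k = f(v_k \mid \cA_{k-1}(T)) \le f(v_k \mid \cA_{i-1}(T)) \le \max_{v \in T} f(v \mid \cA_{i-1}(T)) \le \beta\, \delta_i$, where the first inequality is submodularity (since $\cA_{i-1}(T) \subseteq \cA_{k-1}(T)$ and $v_k \notin \cA_{k-1}(T)$, assuming $v_k \notin \cA_{i-1}(T)$, which holds since the output elements are distinct), and the last is the $\beta$-iterative property at iteration $i$. Hence $\delta_i \ge \delta_k/\beta$ for all $i \le k$, and summing, $f(\cA_k(T)) = \sum_{i=1}^k \delta_i \ge k\,\delta_k/\beta$, i.e.\ $\delta_k \le \beta\, f(\cA_k(T))/k$.

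Chaining the two displays: $f(e\mid \cA(T)) \le \beta\,\delta_k \le \beta \cdot \beta\, f(\cA(T))/k$. This gives an extra factor of $\beta$ compared to the claimed bound, so I expect the statement is really meant with $\cA$'s greedy-like normalization absorbing one factor, or — more likely — the intended argument combines the two steps without doubling: from $f(e\mid\cA_{k-1}(T)) \le \beta\,\delta_k$ and $\delta_k \le \beta f(\cA_k(T))/k$ one does get $\beta^2$, so to land exactly on $\beta$ one should instead bound $f(e\mid\cA_{k-1}(T))$ rather than $f(e\mid\cA_k(T))$, OR bound $\delta_k \le f(\cA_k(T))/k$ using that the $\delta_i$ with the $\beta$ factors still force $\delta_k \le \delta_i$ pointwise when combined correctly. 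Resolving this factor is the main obstacle, and the clean fix is: apply the $\beta$-iterative property once to get $f(e\mid \cA_{k-1}(T)) \le \beta\,\delta_k$ where $\delta_k$ is the $k$-th increment, then show $\delta_k \le f(\cA_k(T))/k$ purely by the pigeonhole/averaging fact that among $k$ increments summing to $f(\cA_k(T))$ the last one — being dominated by each earlier one via submodularity alone (no $\beta$ needed, since $\delta_k = f(v_k\mid\cA_{k-1}(T)) \le f(v_k \mid \cA_{i-1}(T))$ and at iteration $i$ the algorithm chose $v_i$ which by the $\beta$-property has $\delta_i \ge \frac1\beta f(v_k\mid\cA_{i-1}(T)) \ge \frac1\beta \delta_k$) — hmm, the $\beta$ reappears. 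So honestly the bound as I can prove it is $f(e\mid\cA(T)) \le \beta^2 f(\cA(T))/k$, and I would flag that the cleaner statement holds for $\beta=1$ (greedy) and conjecture the paper's proof either restricts attention to that increment structure or that $e$ is bounded against $f(e\mid\cA_{k-1}(T))$; in any case the proof structure above is the right skeleton and the constant is a routine-but-delicate bookkeeping point to verify against the supplement.
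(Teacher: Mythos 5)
There is a genuine gap: you end up proving only $f(e\mid\cA(T)) \le \beta^2 f(\cA(T))/k$ and explicitly concede that you cannot remove the extra factor of $\beta$. The issue is that you route everything through the last increment $\delta_k$: you pay one factor of $\beta$ to relate $f(e\mid\cA_{k-1}(T))$ to $\delta_k$, and then a second factor of $\beta$ to relate $\delta_k$ to the earlier increments $\delta_i$. The paper avoids the double-counting by never introducing $\delta_k$ as an intermediary. Instead, it compares \emph{every} increment directly to the marginal of the target element $e$ at the \emph{final} set: for each $j$,
\begin{equation*}
  f(\cA_j(T)) - f(\cA_{j-1}(T)) \;\ge\; \frac{1}{\beta}\max_{v\in T} f(v\mid\cA_{j-1}(T)) \;\ge\; \frac{1}{\beta} f(e\mid\cA_{j-1}(T)) \;\ge\; \frac{1}{\beta} f(e\mid\cA_{k}(T)),
\end{equation*}
where the last step is submodularity applied to $e$ itself (since $\cA_{j-1}(T)\subseteq\cA_k(T)$), not to the chosen element $v_k$. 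Summing the telescoping left-hand side over $j=1,\dotsc,k$ gives $f(\cA_k(T)) \ge \frac{k}{\beta} f(e\mid\cA_k(T))$, and rearranging yields the claimed bound with a single $\beta$. So the missing idea is simply to push the target element's marginal all the way to the final set $\cA_k(T)$ \emph{before} summing, so that $\beta$ enters exactly once per increment against a common lower bound, rather than chaining two separate $\beta$-losses. Your skeleton (use the iterative property at each step, sum the increments, invoke submodularity) is the right one, but the comparison must be anchored at $f(e\mid\cA_k(T))$ throughout.
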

Intuitively, \eqref{eq:beta-nice-2} states that the marginal gain of any non-selected element cannot be more than $\beta$ times the average objective value of the selected elements. 
This is one of the rules used to define the $\beta$-nice class of algorithms in \cite{mirrokni2015}; however, we note that in general, neither the $\beta$-nice nor $\beta$-iterative classes are a subset of one another.

% \begin{definition}\cite{mirrokni2015} 
% Consider a submodular set function $f$. Given any set $S \subseteq V$ and size $k$, let $\cA$ be an algorithm that returns subset $\cA (S)$ with size at most $k$. This algorithm is said to satisfy $\beta$-property for the function $f$ and some parameter $\beta$ if, for any set $S$ and any element $e \in S \setminus \cA(T)$, the following properties hold:
% \begin{align}
%     \cA(T \setminus \lbrace e \rbrace) &= \cA(T), \label{eq:beta-nice-1}\\
%     f(e|\cA(T)) & \leq \beta \frac{\cA(T)}{k}. \label{eq:beta-nice-2}
% \end{align}
% \end{definition}
% Intuitively, \eqref{eq:} states that the output of the algorithm does not depend on the elements it does not select, and \eqref{eq:beta-nice-2} states that the marginal gain of any non-selected element cannot be more than $\beta$ times the average objective value of the selected elements. Property~\eqref{eq:beta-nice-2} will play a key role in our analysis.

\begin{figure}
    \centering
    \includegraphics[scale=0.3]{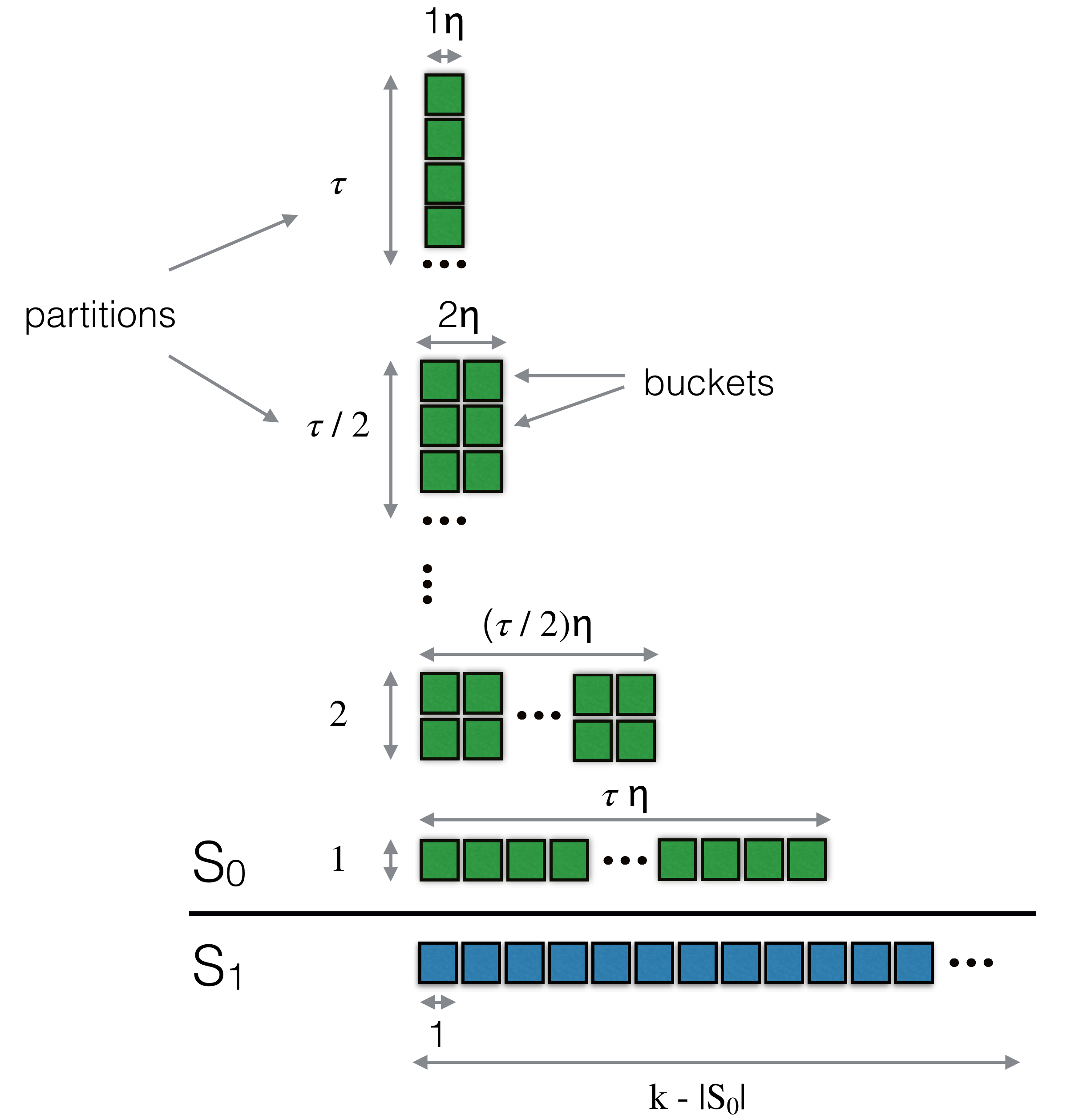}
    \caption{Illustration of the set $S = S_0 \cup S_1$ returned by $\RALG$. The size of $|S_1|$ is $k - |S_0|$, and the size of $|S_0|$ is given in Proposition~\ref{proposition:S_0_size}. Every partition in $S_0$ contains the same number of elements (up to rounding). 
    \label{fig-S}} \vspace*{2ex}
\end{figure}

\subsection{Main result: Approximation guarantee}

For the robust maximization problem, we let $\opt(k, V, \tau)$ denote the optimal set:
\[
    \opt(k, V, \tau) \in \argmax_{S \subseteq V, |S| = k}\; \min_{E \subseteq S, |E| \le \tau} f(S \setminus E).
\]
Moreover, for a set $S$, we let $E^*_{S}$ denote the minimizer
\begin{equation}
\label{eq:E}
 E^*_{S} \in \argmin_{E \subseteq S, |E| \le \tau}f(S \setminus E).
\end{equation}
With these definitions, the main theoretical result of this paper is as follows.

\begin{theorem} \label{thm:main}
Let $f$ be a normalized monotone submodular function, and let $\cA$ be a subroutine satisfying the $\beta$-iterative property.  For a given budget $k$ and parameters $2 \le \tau \le \tfrac{k}{3\eta (\log{k} + 2)}$ and $\eta \ge 4(\log{k} + 1)$, \RALG returns a set $S$ of size $k$ such that % t\todo{either mention that $\tau > 1$, or get rid of that condition}
\begin{multline}\label{eq:main-theorem}
     f(S \setminus E^*_S) \geq \frac{\frac{\eta}{5 \beta^3 \ceillogtau + \eta} \left(1 - e^{-\frac{k - |S_0|}{\beta(k - \tau)}} \right)}{1 + \frac{\eta}{5 \beta^3 \ceillogtau + \eta} \left(1 - e^{-\frac{k - |S_0|}{\beta(k - \tau)}} \right)} \\ \times f(\opt(k, V, \tau) \setminus E^*_{\opt(k, V, \tau)}),
\end{multline}
where $E^*_S$ and $E^*_{\opt(k, V, \tau)}$ are defined as in \eqref{eq:E}.

In addition, if $\tau = o\left(\tfrac{k}{\eta \log{k}}\right)$ and $\eta \ge \log^2{k}$, then we have the following as $k \to \infty$:
\begin{multline}\label{eq:main-theorem-k-infty}
    f(S \setminus E^*_S) \geq \bigg(\frac{1 - e^{-1/\beta}}{2 - e^{-1/\beta}} + o(1)\bigg) \\ \times f(\opt(k, V, \tau) \setminus E^*_{\opt(k, V, \tau)}).
\end{multline}
In particular, \RALG-\GREEDY achieves an asymptotic approximation factor of at least $0.387$, and \RALG-\THRESH with parameter $\epsilon$ achieves an asymptotic approximation factor of at least $0.387(1-\epsilon)$.
\end{theorem}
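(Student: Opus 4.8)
Write $O := \opt(k,V,\tau)$, and, with the notation of \eqref{eq:E}, put $E := E^\star_S$ and $Z := E^\star_O$; by monotonicity of $f$ we may assume $|Z| = \tau$, so that $f(O\setminus Z)$ is the quantity appearing on the right of \eqref{eq:main-theorem} and $|O\setminus Z| = k-\tau$. It is enough to establish the scalar inequality
\[
  f\bigl(O\setminus Z \mid S\setminus E\bigr)\ \le\ \frac{1}{q\,b}\,f(S\setminus E),
  \qquad q := \frac{\eta}{5\beta^{3}\ceillogtau+\eta},\quad b := 1-e^{-\frac{k-|S_0|}{\beta(k-\tau)}},
\]
since monotonicity gives $f(O\setminus Z) \le f\bigl((O\setminus Z)\cup(S\setminus E)\bigr) = f(S\setminus E) + f\bigl(O\setminus Z \mid S\setminus E\bigr) \le \bigl(1+\tfrac{1}{qb}\bigr)f(S\setminus E)$, and the latter rearranges to exactly \eqref{eq:main-theorem}. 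The asymptotic statement \eqref{eq:main-theorem-k-infty} is then immediate: Proposition~\ref{proposition:S_0_size} together with $\tau = o\bigl(k/(\eta\log k)\bigr)$ forces $|S_0| = o(k)$, so $b \to 1-e^{-1/\beta}$, while $\eta \ge \log^2 k$ and $\ceillogtau \le \log k+1$ give $q \to 1$; substituting $\beta = 1$ and $\beta = 1/(1-\epsilon)$ into $\tfrac{1-e^{-1/\beta}}{2-e^{-1/\beta}}$ yields the quoted $0.387$ and $0.387(1-\epsilon)$.

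The first ingredient is a pigeonhole step: inside partition $i$ the $\lceil\tau/2^i\rceil$ buckets are pairwise disjoint and jointly lose at most $|E|\le\tau$ elements, so some bucket $B_{j_i}$ satisfies $|B_{j_i}\cap E| \le \tau/\lceil\tau/2^i\rceil \le 2^i$. Writing $M_i := B_{j_i}\setminus E$, both $\bigcup_i M_i$ and $S_1\setminus E$ are contained in $S\setminus E$, so $f(S\setminus E)$ dominates $f(\bigcup_i M_i)$ and $f(S_1\setminus E)$; these are the two lower bounds through which everything is routed. Since what $\cA$ directly guarantees (via Lemma~\ref{lemma:l-k-beta} and Proposition~\ref{proposition:beta-nice-main-rule}) are statements about the \emph{unpruned} sets $\bigcup_i B_{j_i}$ and $S_1$, one also needs to pass between pruned and unpruned versions: using the order in which $\cA$ emits a bucket together with \eqref{eq:beta-property} and \eqref{eq:beta-nice-2}, deleting at most $2^i$ of the $2^i\eta$ elements of $B_{j_i}$ costs only an $O(\beta/\eta)$ fraction of $f(B_{j_i})$, which under $\eta \ge 4(\log k+1)$ is folded into the constant $5$ appearing in $q$.

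The heart of the proof is the bound on $f\bigl(O\setminus Z\mid S\setminus E\bigr)$, obtained by balancing $S_1$ against the robust part $S_0$ -- the trade-off mentioned in the introduction. One contribution comes from $S_1 = \cA(k-|S_0|,\ V\setminus S_0)$: applying Lemma~\ref{lemma:l-k-beta} with $l = k-|S_0|$ and a comparison size at most $k-\tau$ (so that the guaranteed fraction $1-e^{-l/(\beta m)}$ is at least $b$), $S_1$ recovers at least a $b$-fraction of the best size-$(k-\tau)$ subset of $V\setminus S_0$ and in particular controls the portion of $O\setminus Z$ lying outside $S_0$; this is the origin of the factor $\tfrac1b$ and of the exponent $\tfrac{k-|S_0|}{\beta(k-\tau)}$. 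The remaining, harder contribution must cover the portion of $O\setminus Z$ that lies \emph{inside} $S_0$ while simultaneously absorbing whatever value the adversary destroys within $S_1$, and this is where the recursion advertised in the introduction comes in. Because $B_{j_i}$ is produced by $\cA$ on $V$ with all earlier buckets deleted, Lemma~\ref{lemma:l-k-beta} compares it only to the optimum of that shrunken ground set, so bridging back to $O\setminus Z$ forces one to relate the value captured in partition $i$ to that captured in partition $i-1$ -- whose buckets are half as large but, by diminishing returns and \eqref{eq:beta-nice-2}, proportionally richer per element -- charging the adversary's deletions through \eqref{eq:beta-nice-2} at each step. Telescoping this relation across the $\ceillogtau+1$ partitions, and crucially using that the geometric growth of the bucket sizes keeps the total loss \emph{linear} in $\ceillogtau$ rather than exponential, bounds this contribution by $\tfrac{5\beta^{3}\ceillogtau}{\eta b}\,f(S\setminus E)$ -- which is where the $5\beta^{3}\ceillogtau$ in $q$ is born. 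Adding the two contributions, $\tfrac1b f(S\setminus E) + \tfrac{5\beta^{3}\ceillogtau}{\eta b}f(S\setminus E) = \tfrac1{qb}f(S\setminus E)$, gives the displayed inequality.

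The main obstacle is this cross-partition recursion. Pigeonholing, the $S_1$-versus-$O$ bound from Lemma~\ref{lemma:l-k-beta}, the asymptotic simplification, and the final rearrangement into the form $\tfrac{qb}{1+qb}$ are all routine; the real work is in setting up the recursion correctly -- choosing which size-$2^i\eta$ slice of $O\setminus Z$ to charge to partition $i$, keeping the bookkeeping tight enough that each adversarial deletion is charged only once across partitions, and tracking the $\beta$-factors so that the accumulated loss over all $\ceillogtau+1$ partitions is $O(\beta^{3}\ceillogtau/\eta)$ rather than something that grows with the number of partitions or with $k$. I expect essentially all of the effort to be concentrated there.
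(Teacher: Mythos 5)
Your overall architecture is recognizably the paper's: the pigeonhole selection, per partition $i$, of a bucket losing at most $2^i$ elements; the use of the $\beta$-iterative property to charge each deleted element at roughly $\beta/|B|$ times the bucket value; a cross-partition recursion whose accumulated loss stays linear in $\ceillogtau$; Lemma~\ref{lemma:l-k-beta} applied to $S_1$ with comparison size $k-\tau$; and a final balancing yielding the $\tfrac{qb}{1+qb}$ form. The asymptotic part is handled correctly. The gap is in the central reduction. You claim it suffices to show $f(O\setminus Z \mid S\setminus E) \le \tfrac{1}{qb} f(S\setminus E)$ and split this marginal into a part outside $S_0$ (charged to $S_1$) and a part inside $S_0$ (charged to the recursion). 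Decompose $(O\setminus Z)\setminus (S\setminus E)$ into $(O\setminus Z)\setminus S$, $(O\setminus Z)\cap E_1$, and $(O\setminus Z)\cap E_0$. The first two pieces live in $V\setminus S_0$ and are indeed controlled by $f(\opt(k-\tau, V\setminus S_0)) \le \tfrac{1}{b}f(S_1)$. But converting $f(S_1)$ into $f(S\setminus E)$ already costs the entire $\tfrac{5\beta^3\ceillogtau}{\eta b}$ term: in the paper the recursion's sole purpose is to bound $f(E_1\mid S\setminus E)$, i.e.\ to absorb the adversary's damage to $S_1$, giving $f(S_1)\le \tfrac{1}{q}f(S\setminus E)$. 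It does not cover any portion of the optimum lying inside $S_0$. That leaves $(O\setminus Z)\cap E_0$ — optimal elements sitting in the \emph{removed} part of the robust set — which are in $S_0$ (so unreachable from the $S_1$ guarantee on $V\setminus S_0$) and in $E$ (so they genuinely contribute to the marginal), and nothing in your accounting bounds their value; your proposed inequality is in fact strictly stronger than the theorem and your two contributions do not add up to cover it.

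The missing idea is that the paper never decomposes $O\setminus Z$ itself. It first invokes Lemma~\ref{lemma:orlin} with $X=E_0$ to replace $f(O\setminus Z)$ by $f(\opt(k-\tau, V\setminus E_0))$, a benchmark that avoids $E_0$ by construction, and then splits that benchmark between $S_0\setminus E_0$ and $\opt(k-\tau, V\setminus S_0)$ via the exchange inequality $f(S_0\setminus E_0) + f(\opt(k-\tau,V\setminus S_0)) \ge f(\opt(k-\tau, V\setminus E_0))$. With that substitution your reduction does go through, since the benchmark's marginal on $S\setminus E$ is then supported entirely on $V\setminus S_0$ after discarding $S_0\setminus E_0$; but as written this step is absent, and the related description of the recursion as "covering the portion of $O\setminus Z$ inside $S_0$" misattributes its role.
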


This result solves an open problem raised in \cite{orlin2016robust}, namely, whether a constant-factor approximation guarantee can be obtained for $\tau = o(k)$ as opposed to only $\tau = o\big(\sqrt{k}\big)$.  In the asymptotic limit, our constant factor of $0.387$ for the greedy subroutine matches that of \cite{orlin2016robust}, but our algorithm permits significantly ``higher robustness'' in the sense of allowing larger $\tau$ values.  To achieve this, we require novel proof techniques, which we now outline.

\subsection{High-level overview of the analysis}

The proof of Theorem \ref{thm:main} is provided in the supplementary material.  Here we provide a high-level overview of the main challenges.  

Let $E$ denote a cardinality-$\tau$ subset of the returned set $S$ that is removed.  By the construction of the partitions, it is easy to verify that each partition $i$ contains a bucket from which at most $2^i$ items are removed.  We denote these by $B_0,\dotsc,B_{\lceil \log \tau \rceil}$, and write $E_{B_i} := E \cap B_i$.  Moreover, we define $E_0 := E \cap S_0$ and $E_1 := E \cap S_1$.

We establish the following lower bound on the final objective function value:
\begin{multline}
     f(S \setminus E)
     \ge  \max\bigg\{ f(S_0 \setminus E_0), f(S_1) - f(E_1| (S \setminus E)), \\ f\bigg(\bigcup_{i = 0}^{\ceillogtau} \left( B_i \setminus E_{B_i} \right)\bigg) \bigg\}. \label{eq:three_terms}
\end{multline}
The arguments to the first and third terms are trivially seen to be subsets of $S \setminus E$, and the second term represents the utility of the set $S_1$ subsided by the utility of the elements removed from $S_1$. 

The first two terms above are easily lower bounded by convenient expressions via submodular and the $\beta$-iterative property. The bulk of the proof is dedicated to bounding the third term.  To do this, we establish the following recursive relations with suitably-defined ``small'' values of $\alpha_j$:
\begin{gather}
    f\left(\bigcup_{i = 0}^j \left( B_i \setminus E_{B_i} \right)\right) \ge \left(1 - \frac{1}{1 + \frac{1}{\alpha_j}} \right) f(B_j) \nonumber \\
    f\left(E_{B_j}\ \given[\Big] \ \bigcup_{i = 0}^{j - 1} \left( B_i \setminus E_{B_i} \right)\right) \le \alpha_j f\left(\bigcup_{i = 0}^{j - 1} \left( B_i \setminus E_{B_i} \right)\right). \nonumber
\end{gather}
Intuitively, the first equation shows that the objective value from buckets $i=0,\dotsc,j$ {\em with removals} cannot be too much smaller than the objective value in bucket $j$ {\em without removals}, and the second equation shows that the loss in bucket $j$ due to the removals is at most a small fraction of the objective value from buckets $0,\dotsc,j-1$.  The proofs of both the base case of the induction and the inductive step make use of submodularity properties and the $\beta$-iterative property (\emph{cf.}, Definition \ref{def:beta}).

Once the suitable lower bounds are obtained for the terms in \eqref{eq:three_terms}, the analysis proceeds similarly to \cite{orlin2016robust}.  Specifically, we can show that as the second term increases, the third term decreases, and accordingly lower bound their maximum by the value obtained when the two are equal.  A similar balancing argument is then applied to the resulting term and the first term in \eqref{eq:three_terms}.

The condition $\tau \le \tfrac{k}{3\eta (\log{k} + 2)}$ follows directly from Proposition \ref{proposition:S_0_size}; namely, it is a sufficient condition for $|S_0| \le k$, as is required by \RALG.

%\input{550-theory}
%!TEX root = 000-faros.tex
\section{Experiments} \label{sec:experiments}

In this section, we numerically validate the performance of $\RALG$ and the claims given in the preceding sections.  In particular, we compare our algorithm against the $\OSU$ algorithm proposed in~\cite{orlin2016robust} on different datasets and corresponding objective functions (see Table~\ref{table:datasets}).  We demonstrate matching or improved performance in a broad range of settings, as well as observing that $\RALG$ can be implemented with larger values of $\tau$, corresponding to a greater robustness.  Moreover, we show that for certain real-world data sets, the classic $\GREEDY$ algorithm can perform badly for the robust problem. We do not compare against $\SAT$ \cite{krause2008robust}, due to its high computational cost for even a small $\tau$.  
%Finally, we also investigate the robustness of the structure of our solution set in comparison to the structure proposed in~\cite{orlin2016robust}.

{\bf Setup.} Given a solution set $S$ of size $k$, we measure the performance in terms of the minimum objective value upon the worst-case removal of $\tau$ elements, i.e. $\min_{Z \subseteq S, |Z|\leq{\tau}}\  f(S\setminus Z)$. Unfortunately, for a given solution set $S$, finding such a set $Z$ is an instance of the submodular minimization problem with a cardinality constraint,\footnote{This can be seen by noting that for submodular $f$ and any  $Z \subseteq X \subseteq V$, $f'(Z) = f(X \setminus Z)$ remains submodular.} which is known to be NP-hard with polynomial approximation factors \cite{svitkina2011submodular}. Hence, in our experiments, we only implement the optimal ``adversary'' (i.e., removal of items) for small to moderate values of $\tau$ and $k$, for which we use   a fast C++ implementation of branch-and-bound.  % To do this effectively, we use an application-dependent branch and bound algorithm.\todo{edit this sentence as required.} 

Despite the difficulty in implementing the optimal adversary, we observed in our experiments that the {\em greedy adversary}, which iteratively removes elements to reduce the objective value as much as possible, has a similar impact on the objective compared to the optimal adversary for the data sets considered.  Hence, we also provide a larger-scale experiment in the presence of a greedy adversary.  Throughout, we write OA and GA to abbreviate the optimal adversary and greedy adversary, respectively.

In our experiments, the size of the robust part of the solution set (i.e.,~$|S_0|$) is set to $\tau^2$ and $\tau \log \tau$ for \OSU and \RALG, respectively.  That is, we set $\eta = 1$ in \RALG, and similarly ignore constant and logarithmic factors in \OSU, since both appear to be unnecessary in practice.  We show both the ``raw'' objective values of the solutions, as well as the objective values after the removal of $\tau$ elements.  In all experiments, we implement \GREEDY using the  \LGREEDY implementation given in \cite{minoux1978}.

 The objective functions shown in Table~\ref{table:datasets} are given in Section \ref{sec:apps}.  For the exemplar objective function, we use $d(s,v) = \| s - v \|^2$, and let the reference element $e_0$ be the zero vector. Instead of using the whole set $V$, we approximate the objective by considering a smaller random subset of $V$ for improved computational efficiency. Since the objective is additively decomposable and bounded, standard concentration bounds (e.g., the Chernoff bound) ensure that the empirical mean over a random subsample can be made arbitrarily accurate. %\cite{lucic16horizontally}.

{\bf Data sets.} We consider the following datasets, along with the objective functions given in Section \ref{sec:apps}:
% \begin{itemize}
%     \item \emph{Influence maximization in \textsc{ego-Facebook}}:
%     This network data consists of social circles (or friends lists) from Facebook
%     which combined together consists of $4039$ nodes and $88234$ edges (undirected).  
%     \item  \emph{Influence maximization in \textsc{ego-Twitter}}: This dataset consists of $973$ social circles from Twitter, that
%     constitute a network of $81306$ nodes and $1768149$ edges (directed). Both,  \textsc{ego-Facebook} and \textsc{ego-Twitter} were first used in~\cite{mcauley2012}. 
%     \item \emph{Data summarization in \textsc{Tiny10k} and \textsc{Tiny50k}}: We used two Tiny Images data sets of size $10k$ and $50k$ consisting of images each represented as a $3072$ dimensional vectors~\cite{torralba2008}. Besides the number of images, these two datasets also differ in the number of classes that images are grouped into. In our experiments, we make the vectors to have a zero mean.
% \end{itemize}

\begin{itemize} \setlength\itemsep{1ex}
    \item \textsc{ego-Facebook}:
    This network data consists of social circles (or friends lists) from Facebook
    forming an undirected graph with $4039$ nodes and $88234$ edges.  
    \item  \textsc{ego-Twitter}: This dataset consists of $973$ social circles from Twitter, forming a directed graph with $81306$ nodes and $1768149$ edges. Both \textsc{ego-Facebook} and \textsc{ego-Twitter} were used previously in~\cite{mcauley2012}. 
    \item \textsc{Tiny10k} and \textsc{Tiny50k}: We used two Tiny Images data sets of size $10k$ and $50k$ consisting of images each represented as a $3072$-dimensional vector~\cite{torralba2008}. Besides the number of images, these two datasets also differ in the number of classes that the images are grouped into. We shift each vectors to have zero mean.
    \item \textsc{CM-Molecules}: This dataset consists of $7211$ small organic molecules, each represented as a $276$ dimensional vector. Each vector is obtained by processing the molecule's \emph{Coulomb} matrix representation \cite{rupp2015machine}. We shift and normalize each vector to zero mean and unit norm.
\end{itemize}

\begin{figure*}[ht!]
\vspace*{2ex}
    \centering
    \begin{subfigure}{.33 \textwidth}
      \centering
      \includegraphics[scale=0.3]{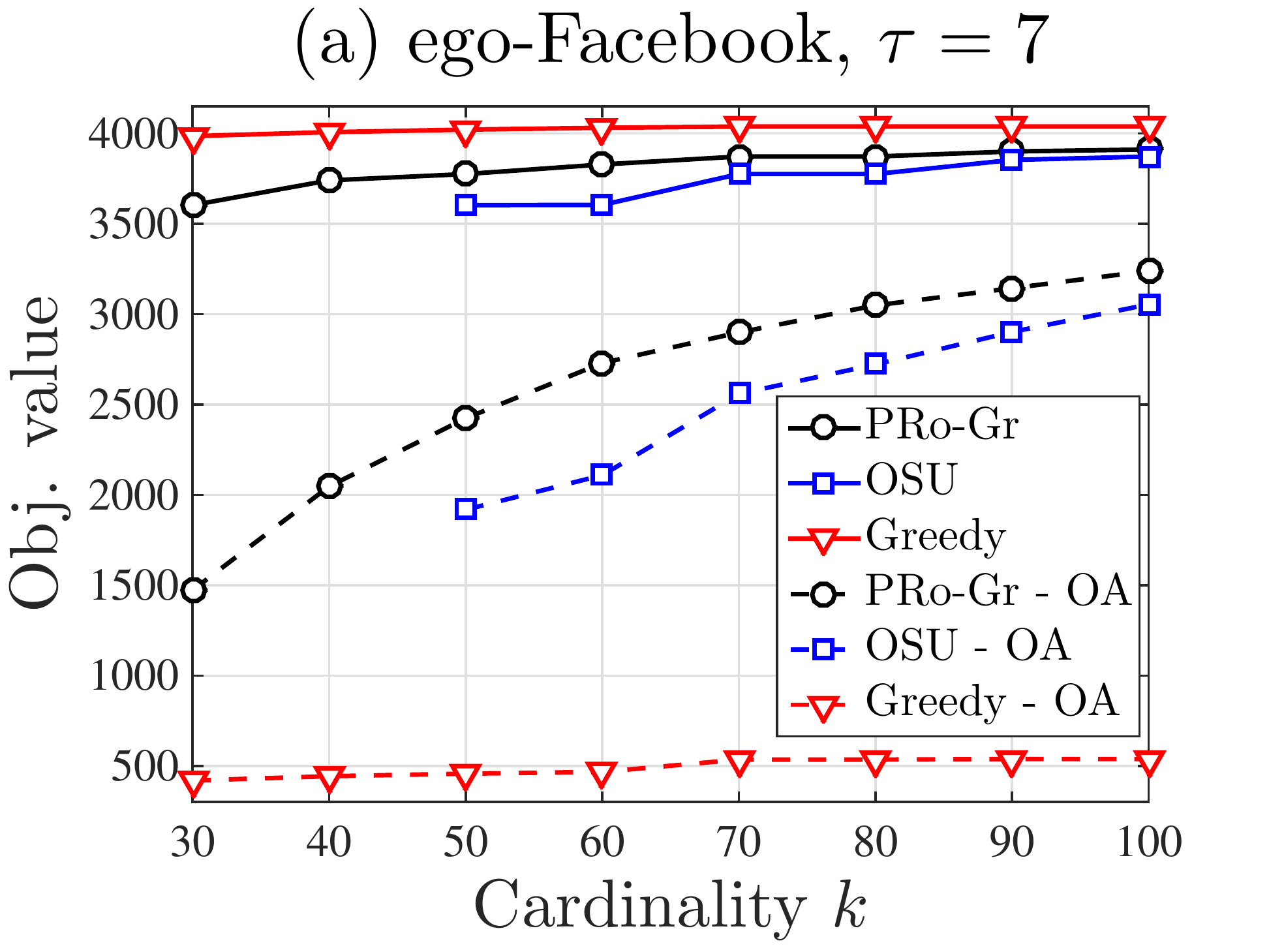}
    \end{subfigure}
    \begin{subfigure}{.33\textwidth}
      \centering
      \includegraphics[scale=0.3]{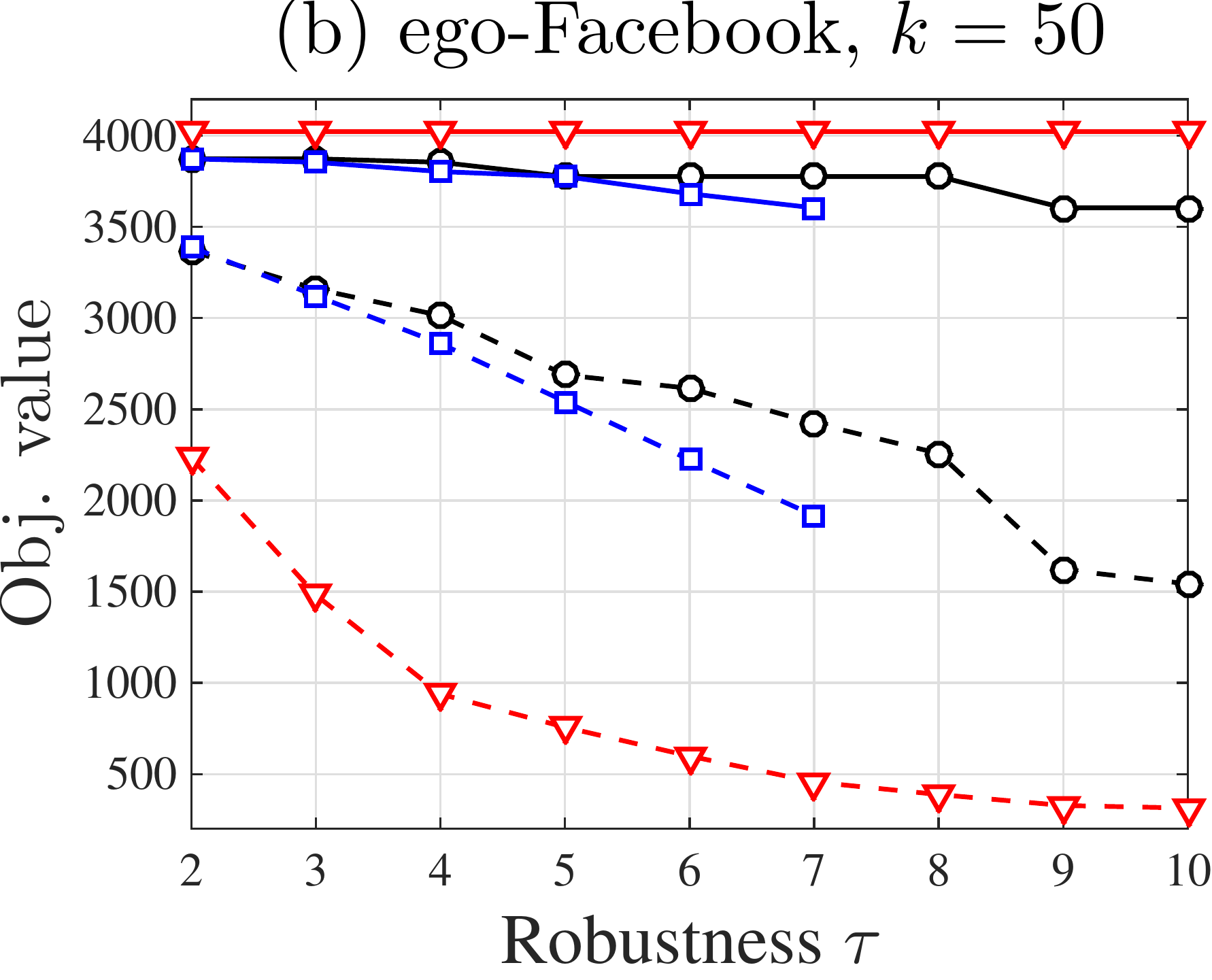}
    \end{subfigure}%
    \begin{subfigure}{.33\textwidth}
      \centering
      \includegraphics[scale=0.3]{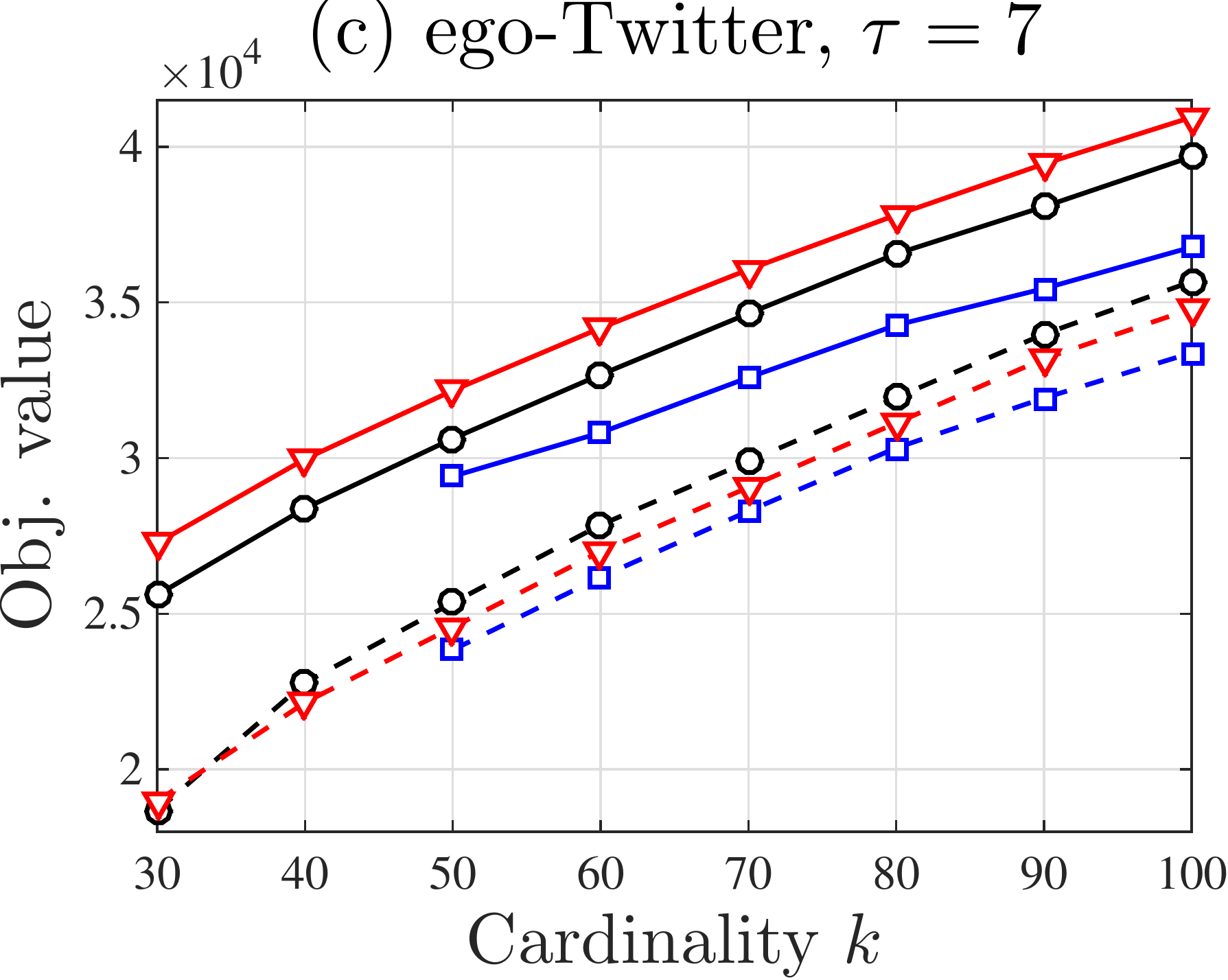}
    \end{subfigure}
    
    \medskip
    
    \begin{subfigure}{.33 \textwidth}
      \centering
      \includegraphics[scale=0.3]{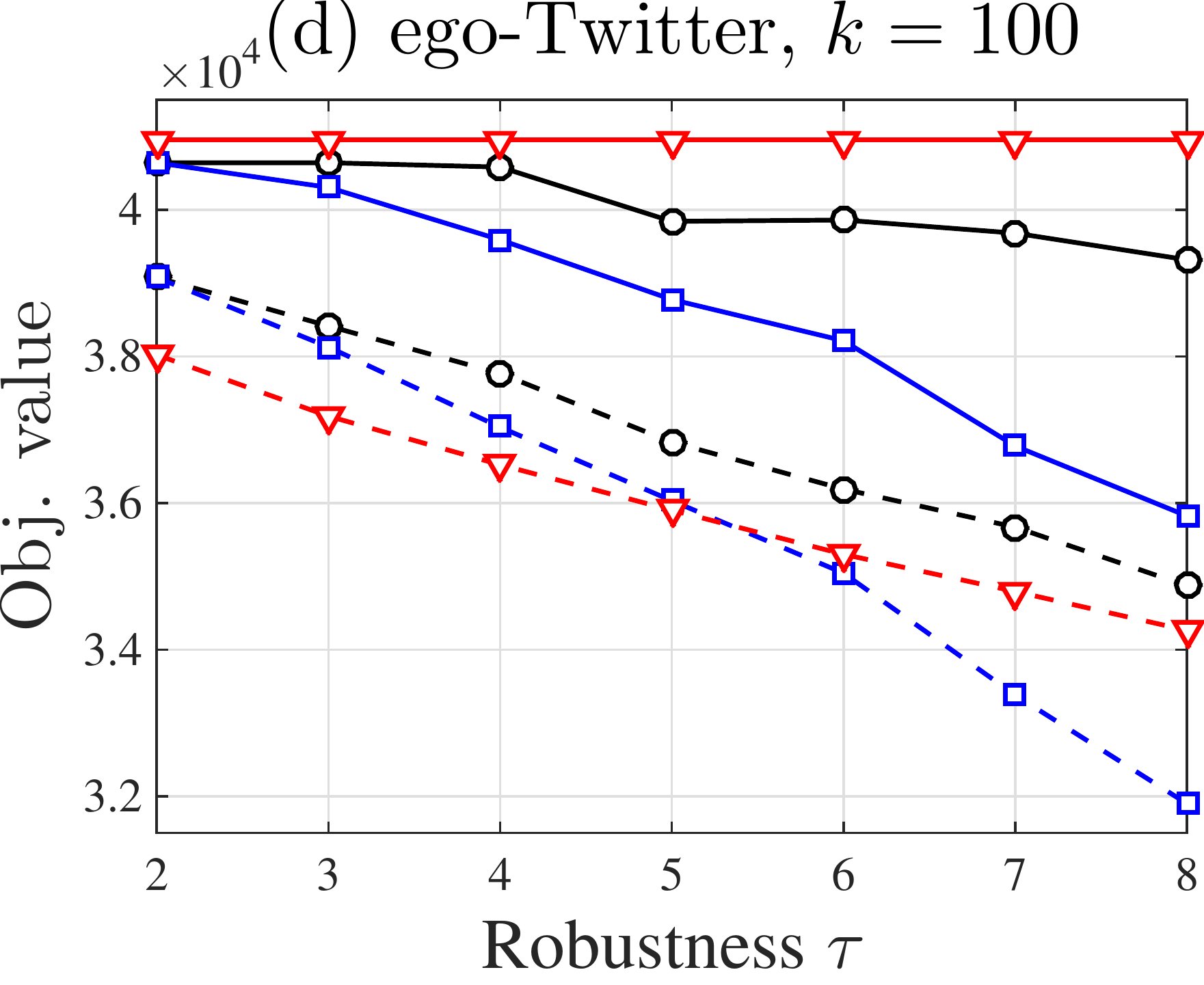}
    \end{subfigure}
    \begin{subfigure}{.33\textwidth}
      \centering
      \includegraphics[scale=0.3]{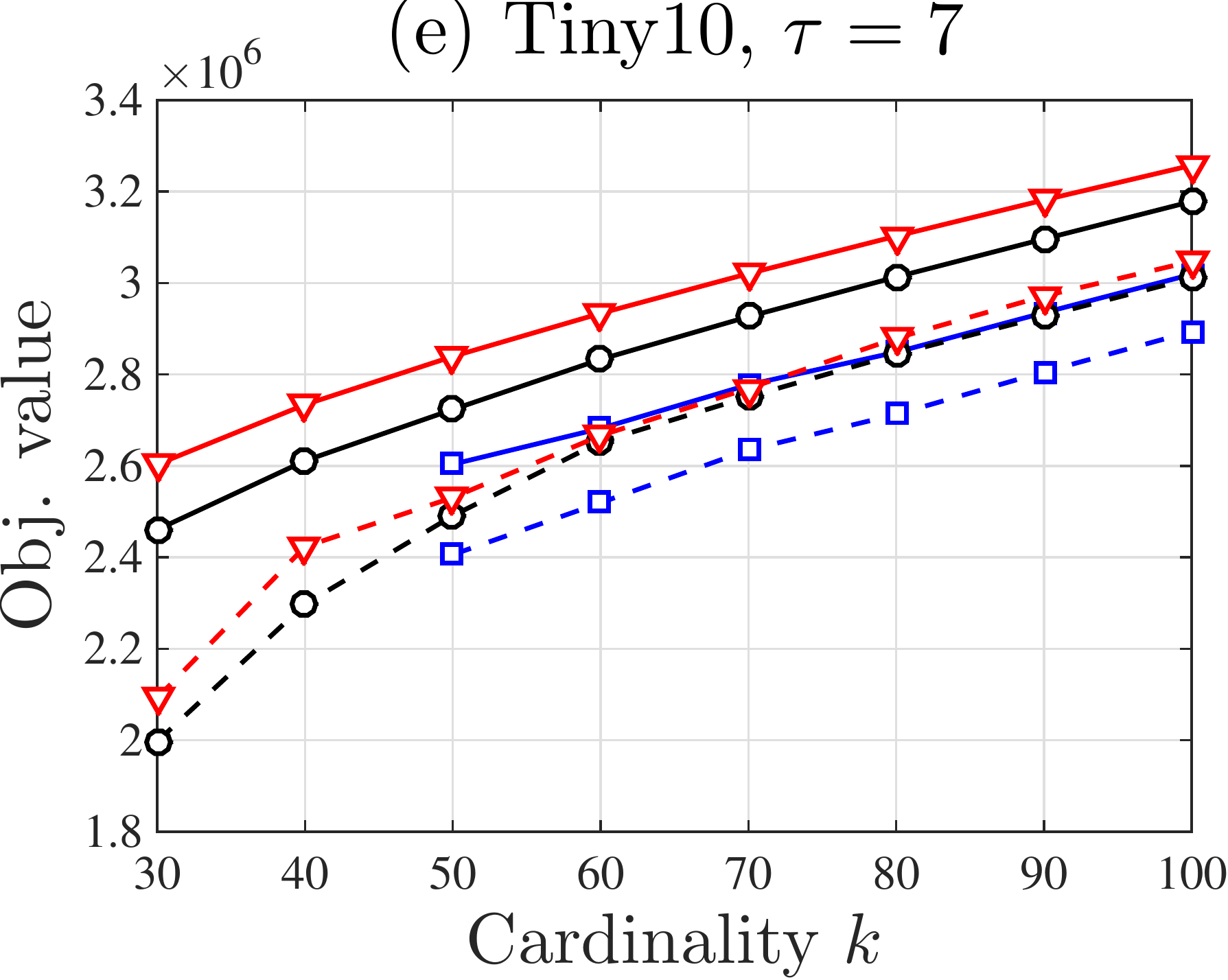}
    \end{subfigure}%
    \begin{subfigure}{.33\textwidth}
      \centering
      \includegraphics[scale=0.3]{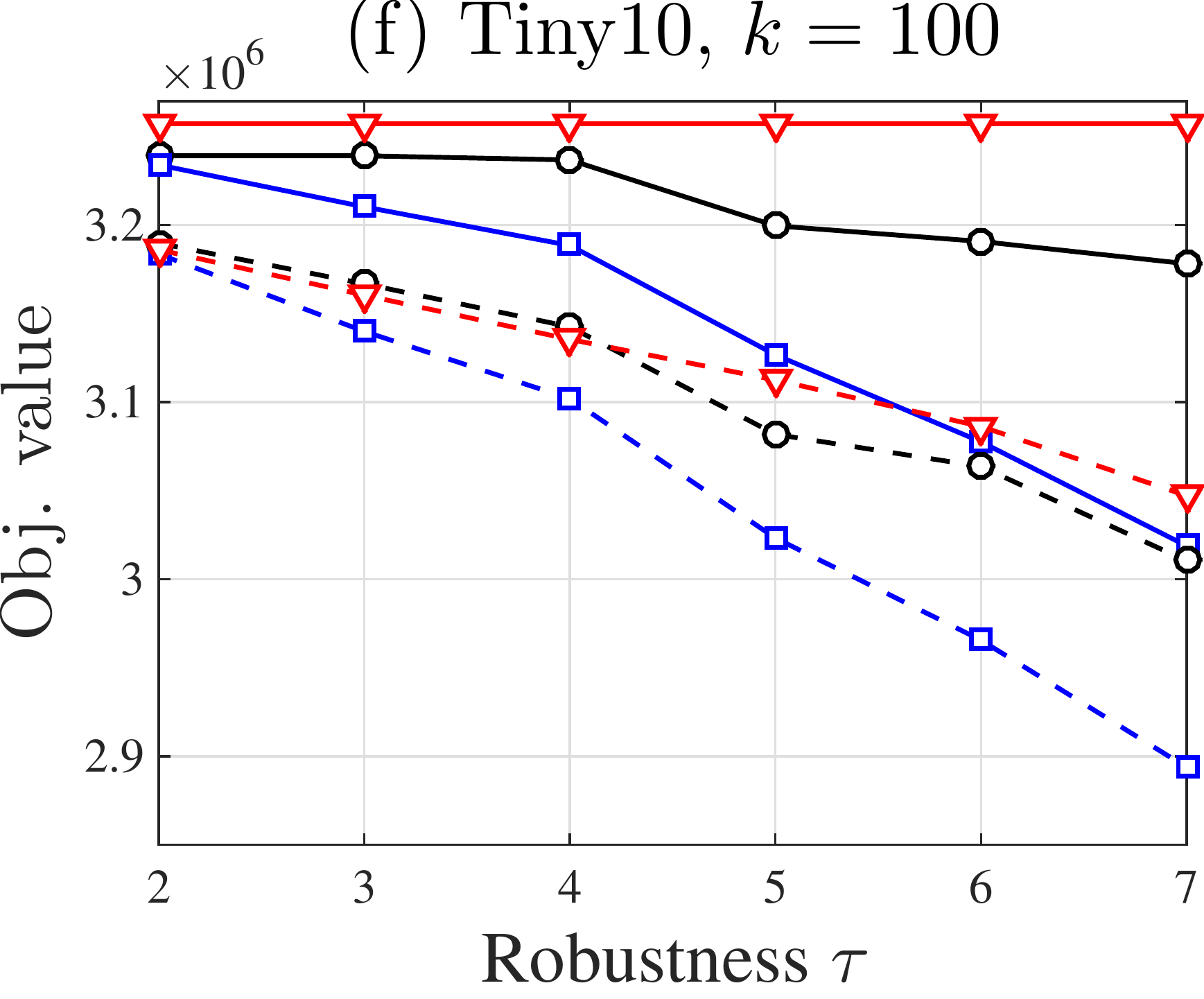}
    \end{subfigure}

    \medskip
    
    \begin{subfigure}{.33 \textwidth}
      \centering
      \includegraphics[scale=0.3]{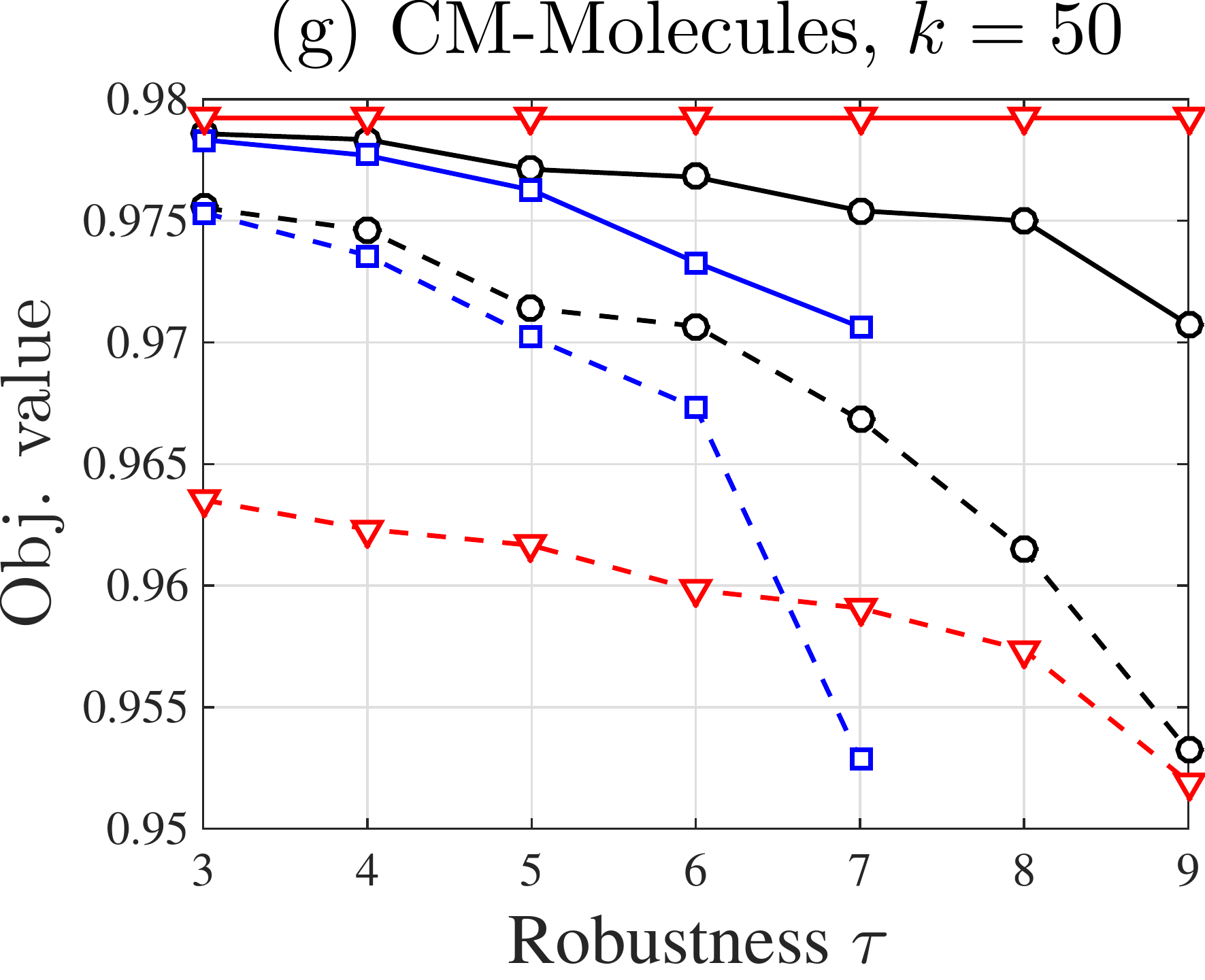}
    \end{subfigure}
    \begin{subfigure}{.33\textwidth}
      \centering
      \includegraphics[scale=0.3]{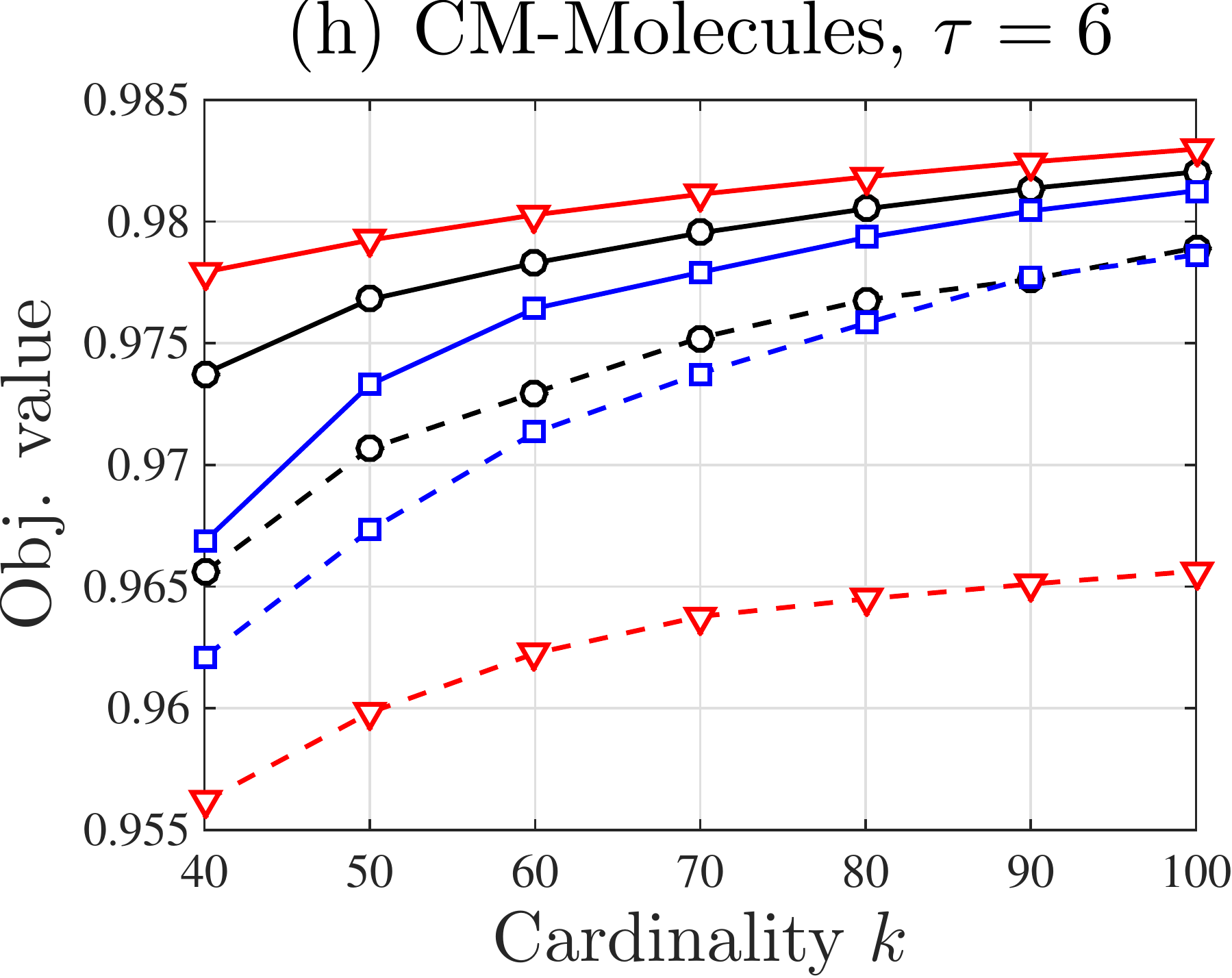}
    \end{subfigure}%
    \begin{subfigure}{.33\textwidth}
      \centering
      \includegraphics[scale=0.3]{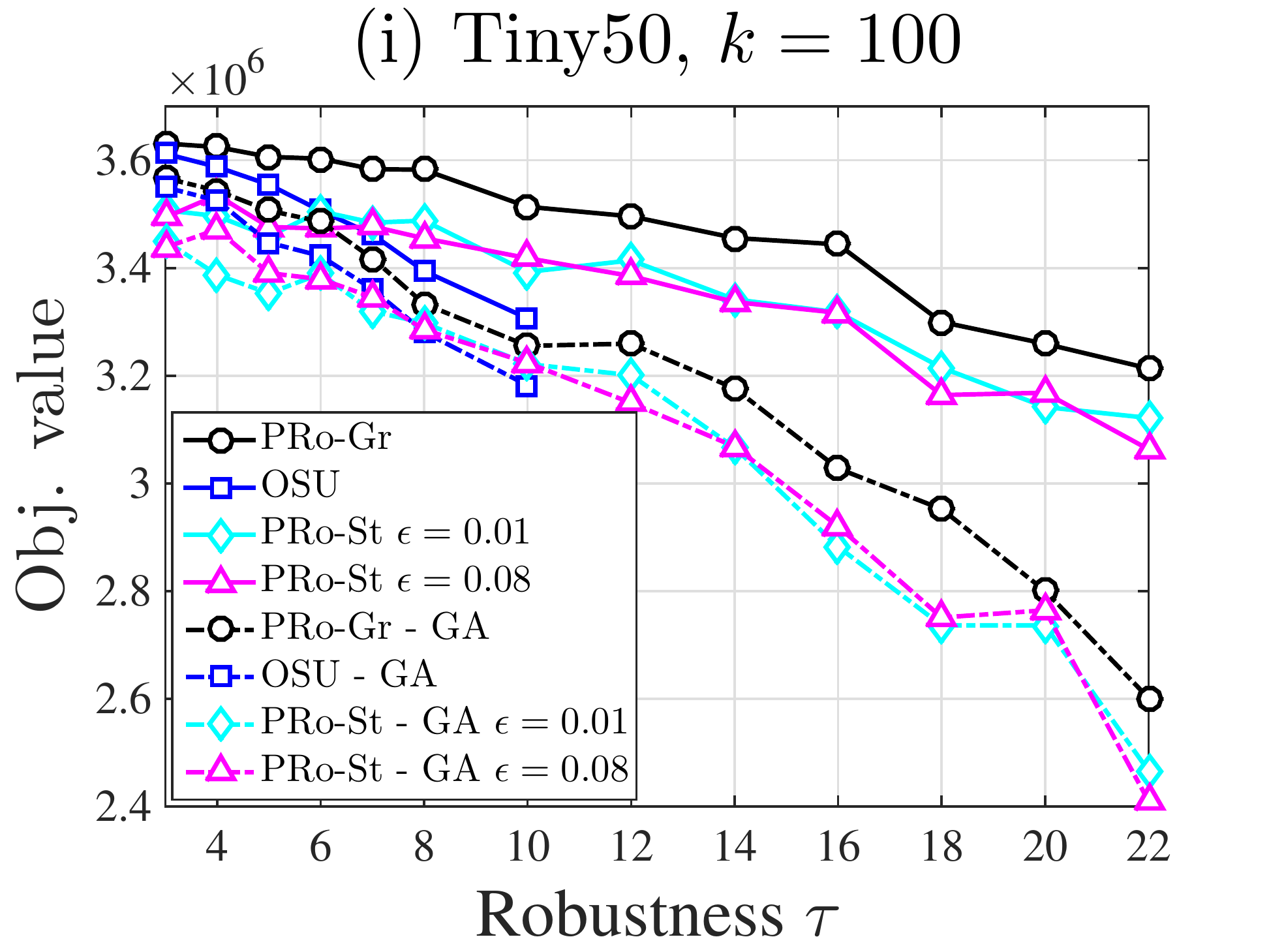}
    \end{subfigure}
    \vspace*{1ex}
    \caption{Numerical comparisons of the algorithms \RALG-\GREEDY, \GREEDY and \OSU, and their objective values {$\RALG$\textsc{-OA}}, {$\OSU$\textsc{-OA}} and {$\GREEDY$\textsc{-OA}} once $\tau$ elements are removed. Figure (i) shows the performance on the larger scale experiment where both \GREEDY and \STOCHG are used as subroutines in \RALG.}  
    \vspace*{2ex}
\end{figure*}
\begin{table}[h!]
    \centering
    \vspace*{2ex}
    \begin{tabular}{cccc} \toprule
        {Dataset} & $n$ & dimension & $f$  \\ \midrule
        Tiny-10k & $10\ 000$ & $3074$ &  Exemplar \\
        Tiny-50k  & $50\ 000$ & $3074$ & Exemplar\\
        CM-Molecules  & $7211$ & $276$ & Exemplar \\ \midrule
        {Network} & \# nodes & \# edges &  $f$ \\ \midrule
        ego-Facebook &  $4039$ & $88\ 234$  &  DomSet \\
        ego-Twitter  &  $81\ 306$ &  $1\ 768\ 149$ &  DomSet \\
        \bottomrule
    \end{tabular}
    \caption{Datasets and corresponding objective functions.}
    %\vspace*{2ex}
    \label{table:datasets}
\end{table}

{\bf Results.} In the first set of experiments, we compare $\RALG$-$\GREEDY$ (written using the shorthand $\GRALG$ in the legend) against $\GREEDY$ and $\OSU$ on the \textsc{ego-Facebook} and \textsc{ego-Twitter} datasets. In this experiment, the dominating set selection objective in \eqref{eq:domset} is considered. Figure~2 (a) and (c) show the results before and after the worst-case removal of $\tau = 7$ elements for different values of $k$. In Figure 2 (b) and (d), we show the objective value for fixed $k=50$ and $k=100$, respectively, while the robustness parameter $\tau$ is varied. 

\GREEDY~achieves the highest raw objective value, followed by \RALG-\GREEDY~and \OSU. However, after the worst-case removal, {$\RALG$-\GREEDY-\textsc{OA}} outperforms both {$\OSU$\textsc{-OA}} and {$\GREEDY$\textsc{-OA}}. In Figure 2 (a) and (b), {$\GREEDY$\textsc{-OA}} performs poorly due to a high concentration of the objective value on the first few elements selected by $\GREEDY$. While $\OSU$ requires $k \geq \tau^2$, $\RALG$ only requires $k \geq \tau \log \tau$, and hence it can be run for larger values of $\tau$ (e.g., see Figure 2 (b) and (c)). Moreover, in Figure 2 (a) and (b), we can observe that although \RALG uses a smaller number of elements to build the robust part of the solution set, it has better robustness in comparison with \OSU. 

In the second set of experiments, we perform the same type of comparisons on the \textsc{Tiny10} and \textsc{CM-Molecules} datasets. The exemplar based clustering in \eqref{eq:exemplar} is used as the objective function. In Figure 2 (e) and (h), the robustness parameter is fixed to $\tau = 7$ and $\tau=6$, respectively, while the cardinality $k$ is varied. In Figure 2 (f) and (h), the cardinality is fixed to $k=100$ and $k=50$, respectively, while the robustness parameter $\tau$ is varied.

Again, \GREEDY~achieves the highest objective value. On the \textsc{Tiny10} dataset, {$\GREEDY$\textsc{-OA}} (Figure 2 (e) and (f)) has a large gap between the raw and final objective, but it still slightly outperforms {$\RALG$-\GREEDY-\textsc{OA}}.  This demonstrates that \GREEDY can work well in some cases, despite failing in others.  We observed that it succeeds here because the objective value is relatively more uniformly spread across the selected elements. On the same dataset, {$\RALG$-\GREEDY-\textsc{OA}} outperforms {$\OSU$\textsc{-OA}}. On our second dataset \textsc{CM-Molecules} (Figure 2 (g) and (h)), {$\RALG$-\GREEDY-\textsc{OA}} achieves the highest robust objective value, followed by {$\OSU$\textsc{-OA}} and {$\GREEDY$\textsc{-OA}}.

In our final experiment (see Figure 2 (i)), we compare the performance of \RALG-\GREEDY against two instances of \RALG-\STOCHG  with $\epsilon=0.01$ and $\epsilon =0.08$ (shortened to \SRALG in the legend), seeking to understand to what extent using the more efficient stochastic subroutine impacts the performance.  We also show the performance of \OSU. In this experiment, we fix $k=100$ and vary $\tau$. We use the greedy adversary instead of the optimal one, since the latter becomes computationally challenging for larger values of $\tau$.

In Figure 2 (i), we observe a slight decrease in the objective value of  \RALG-\STOCHG due to the stochastic optimization. On the other hand, the gaps between the robust and non-robust solutions remain similar, or even shrink.  Overall, we observe that at least in this example, the stochastic subroutine does not compromise the quality of the solution too significantly, despite having a lower computational complexity.
 
% the robustness of the selected solutions has remained approximately the same, or for some values of $\tau$ we can even observe some improvements in terms of robustness, i.e. the gap between \SRALG and \SRALG-\textsc{GA} is smaller than the corresponding gap between \RALG and \RALG-\textsc{GA}.

% \onecolumn
% \begin{figure}
%     \centering
%     \includegraphics[scale=0.3]{figs/fb/tau5.png}
% \end{figure}
% \twocolumn

% \onecolumn
% \begin{figure}
%     \centering
%     \begin{subfigure}{.33\textwidth}
%       \centering
%       \includegraphics[scale=0.3]{}
%     \end{subfigure}%
%     \begin{subfigure}{.33\textwidth}
%       \centering
%       \includegraphics[scale=0.3]{}
%     \end{subfigure}
%     \begin{subfigure}{.33 \textwidth}
%       \centering
%       \includegraphics[scale=0.3]{}
%     \end{subfigure}
    
%     \bigskip
    
%     \begin{subfigure}{.33\textwidth}
%       \centering
%       \includegraphics[scale=0.3]{}
%     \end{subfigure}%
%     \begin{subfigure}{.33\textwidth}
%       \centering
%       \includegraphics[scale=0.3]{}
      
%     \end{subfigure}
%     \begin{subfigure}{.33 \textwidth}
%       \centering
%       \includegraphics[scale=0.3]{figs/myplot-eps-converted-to.pdf}
%     \end{subfigure}
    
%     \caption{Overall caption}
% \end{figure}
% \twocolumn

\section{Conclusion} 
We have provided a new Partitioned Robust (\RALG) submodular maximization algorithm attaining a constant-factor approximation guarantee for general $\tau = o(k)$, thus resolving an open problem posed in \cite{orlin2016robust}.  Our algorithm uses a novel partitioning structure with partitions consisting of buckets with exponentially decreasing size, thus providing a ``robust part'' of size $O(\tau \mathrm{poly}\log \tau)$.  We have presented a variety of numerical experiments where \RALG outperforms both \GREEDY and \OSU. A potentially interesting direction for further research is to understand the {\em linear regime}, in which $\tau = \alpha k$ for some constant $\alpha \in (0,1)$, and in particular, to seek a constant-factor guarantee for this regime.

{\bf Acknowledgment.} This work was supported in part by the European Commission under Grant ERC Future Proof, SNF 200021-146750 and SNF CRSII2-147633, and `EPFL Fellows' (Horizon2020 665667).

% % Acknowledgements should only appear in the accepted version. 
% \section*{Acknowledgments} 
 
% \textbf{Do not} include acknowledgements in the initial version of
% the paper submitted for blind review.
\newpage
\bibliography{cite}
\bibliographystyle{icml2017}

\onecolumn
{\huge \bf \centering Supplementary Material \par }

{\large \centering {\bf ``Robust Submodular Maximization: A Non-Uniform Partitioning Approach'' (ICML 2017)} \\ Ilija Bogunovic, Slobodan Mitrovi\'c, Jonathan Scarlett, and Volkan Cevher \par}
%!TEX root = 000-faros.tex
\appendix
\section{Proof of Proposition \ref{proposition:S_0_size}}
We have
 \begin{align*}
 |S_0| &= \sum_{i=0}^{\ceillogtau} \lceil \tau / 2^i \rceil 2^i \eta \\
       &\le  \sum_{i=0}^{\ceillogtau} \left( \frac{\tau}{2^i} + 1 \right) 2^i \eta \\
       &\le \eta(\ceillogtau + 1)(\tau + 2^{\ceillogtau})\\
       &\le 3\eta\tau(\ceillogtau + 1) \\
       &\le 3\eta\tau(\log k + 2).
 \end{align*}

\section{Proof of Proposition~\ref{proposition:beta-nice-main-rule}}
Recalling that $\cA_j(T)$ denotes a set constructed by the algorithm after $j$ iterations, we have
\begin{align}
    f(\cA_{j}(T)) - f(\cA_{j-1}(T)) &\geq \frac{1}{\beta} \max_{e \in T} f(e|\cA_{j-1}(T)) \nonumber\\
    &\geq \frac{1}{\beta} \max_{e \in T} f(e|\cA_{k}(T)) \nonumber\\ 
    &\geq \frac{1}{\beta} \max_{e \in T \setminus \cA_k(T)} f(e|\cA_{k}(T)), \label{eq:beta_prop_1}
\end{align}
where the first inequality follows from the $\beta$-iterative property \eqref{eq:beta-property}, and the second inequality follows from  $\cA_{j-1}(S) \subseteq \cA_{k}(S)$ and the submodularity of $f$.

Continuing, we have 
\begin{align*}
    f(\cA_{k}(T)) &= \sum_{j=1}^{k} f(\cA_{j}(T)) - f(\cA_{j-1}(T)) \\
    & \geq  \frac{k}{\beta} \max_{e \in T \setminus \cA_k(T)} f(e| \cA_{k}(T)),
\end{align*}
where the last inequality follows from \eqref{eq:beta_prop_1}.

By rearranging, we have for any $e \in T \setminus \cA_k(T) $ that
\[
    f(e| \cA_k(T)) \leq \beta \frac{f(\cA_k(T))}{k}.
\]

\section{Proof of Lemma \ref{lemma:l-k-beta}}

Recalling that $A_j(T)$ denotes the set constructed after $j$ iterations when applied to $T$, we have
\begin{align}
    \max_{e \in T \setminus A_{j-1}(T)} f(e|A_{j-1}(T)) & \geq \frac{1}{k} \sum_{e \in \opt(k,T) \setminus A_{j-1}(T)} f(e|A_{j-1}(T)) \nonumber \\
    & \geq \frac{1}{k} f(\opt(k,T)| A_{j-1}(T)) \nonumber \\
    & \geq \frac{1}{k} \big( f(\opt(k,T))  - f(A_{j-1}(T))\big), \label{eq:beta-l-k_1}
\end{align}
where the first line holds since the maximum is lower bounded by the average, the line uses submodularity, and the last line uses monotonicity.

By combining the $\beta$-iterative property with \eqref{eq:beta-l-k_1}, we obtain
\begin{align*}
    f(\cA_{j}(T)) - f(\cA_{j-1}(T)) &\geq \frac{1}{\beta} \max_{e \in T \setminus A_{j-1}(T)} f(e|A_{j-1}(T)) \\
    & \geq \frac{1}{k \beta} \big( f(\opt(k, T)) - f(A_{j-1}(T)) \big).
\end{align*}
By rearranging, we obtain
\begin{equation}
    \label{eq:beta-l-k_2}
    f(\opt(k, T)) - f(A_{j-1}(T)) \leq  \beta k \big(  f(\cA_{j}(T)) - f(\cA_{j-1}(T)) \big).
\end{equation}
We proceed by following the steps from the proof of Theorem~1.5 in \cite{krause2012submodular}. Defining $\delta_{j}:= f(\opt(k, T)) - f(A_{j}(T))$, we can rewrite~\eqref{eq:beta-l-k_2} as $\delta_{j-1} \leq \beta k (\delta_{j-1} - \delta_{j})$.
By rearranging, we obtain 
\[
    \delta_j \leq \left(1 - \frac{1}{\beta k} \right) \delta_{j-1}.
\]
Applying this recursively, we obtain $\delta_l \leq \big( 1 - \frac{1}{\beta k}\big)^l \delta_0$, where $\delta_0 = f(\opt(k, T))$ since $f$ is normalized (i.e., $f(\emptyset) = 0$). Finally, applying $1 - x \leq e^{-x}$ and rearranging, we obtain
\[
    f(\cA_{l}(T)) \geq \left( 1 - e^{-\frac{l}{\beta k}} \right) f(\opt(k,T)).
\]

\section{Proof of Theorem \ref{thm:main}}

\subsection{Technical Lemmas}

We first provide several technical lemmas that will be used throughout the proof.  We begin with a simple property of submodular functions.

 \begin{lemma}\label{lemma:A-B-R}
     For any submodular function $f$ on a ground set $V$, and any sets $A,B,R \subseteq V$, we have
     \[
         f(A \cup B) - f(A \cup (B \setminus R)) \le f(R\ |\ A).
     \]
 \end{lemma}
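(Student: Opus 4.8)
The plan is to prove the inequality
\[
    f(A \cup B) - f(A \cup (B \setminus R)) \le f(R \mid A)
\]
directly from the diminishing-returns form of submodularity, by writing the left-hand side as a telescoping sum over the elements of $R$ that actually lie in $B$.

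First I would dispose of a triviality: only the elements of $R \cap B$ matter on the left-hand side, since $A \cup B = A \cup (B \setminus (R \setminus B))$ is not what we want — rather, note that removing elements of $R$ not in $B$ changes nothing, so $A \cup (B \setminus R) = A \cup (B \setminus (R \cap B))$. Hence it suffices to bound $f(A \cup B) - f(A \cup (B \setminus (R\cap B)))$, and by monotonicity $f(R \cap B \mid A) \le f(R \mid A)$, so proving the claim with $R$ replaced by $R \cap B$ is enough. Write $R \cap B = \{r_1, \dots, r_m\}$ and define the nested sets $C_t := A \cup (B \setminus \{r_1, \dots, r_t\})$ for $t = 0, \dots, m$, so that $C_0 = A \cup B$ and $C_m = A \cup (B \setminus R)$. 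Then
\[
    f(A \cup B) - f(A \cup (B \setminus R)) = \sum_{t=1}^{m} \big( f(C_{t-1}) - f(C_t) \big) = \sum_{t=1}^m f(r_t \mid C_t),
\]
since $C_{t-1} = C_t \cup \{r_t\}$.

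Next I would apply submodularity to each term. For each $t$ we have $A \subseteq C_t$ (indeed $A \subseteq C_t$ for all $t$ since we only ever delete elements of $B$), and $A \cup \{r_1, \dots, r_{t-1}\} \subseteq C_t$ as well. The cleanest route: compare $f(r_t \mid C_t)$ with $f(r_t \mid A \cup \{r_1, \dots, r_{t-1}\})$. Since $A \cup \{r_1,\dots,r_{t-1}\} \subseteq C_t$ and $r_t \notin C_t$, submodularity gives $f(r_t \mid C_t) \le f(r_t \mid A \cup \{r_1, \dots, r_{t-1}\})$. Summing over $t$, the right-hand side telescopes into $f(\{r_1, \dots, r_m\} \mid A) = f(R \cap B \mid A) \le f(R \mid A)$ by monotonicity, completing the argument.

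I do not expect any genuine obstacle here — the statement is a standard consequence of submodularity. The only point requiring a little care is the bookkeeping of which containments hold: one must check that $A \cup \{r_1,\dots,r_{t-1}\}$ is indeed a subset of $C_t = A \cup (B \setminus \{r_1,\dots,r_t\})$, which holds precisely because $r_1, \dots, r_{t-1} \in B$ and are not among the first $t$ deleted elements only if indices are handled correctly — so one should delete in the order $r_m, r_{m-1}, \dots$ or equivalently index so that $C_t$ still contains $r_1, \dots, r_{t-1}$; I would simply set $C_t := A \cup \{r_1,\dots,r_{t-1}\} \cup (B \setminus R)$, making the containment immediate. With that indexing the telescoping and the submodularity step both go through cleanly.
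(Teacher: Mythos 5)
Your proof is correct, and you correctly diagnosed and repaired the one real pitfall yourself: with $C_t := A \cup (B \setminus \{r_1,\dots,r_t\})$ the containment $A \cup \{r_1,\dots,r_{t-1}\} \subseteq C_t$ fails (the $r_s$ have been deleted), whereas your repaired choice --- adding the elements of $R \cap B$ back one at a time on top of $A \cup (B\setminus R)$ --- makes both the telescoping and the containment immediate. The route is genuinely different from the paper's. The paper splits $R$ into $R_1 = R \setminus A$ and $R_2 = R \cap A$, rewrites the left-hand side as the single set-level marginal $f(R_1 \mid (A\cup B)\setminus R_1)$, applies submodularity once to shrink the conditioning set from $(A\cup B)\setminus R_1$ down to $A$, and absorbs $R_2 \subseteq A$ for free. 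You instead decompose both sides into single-element marginals, apply the element-wise diminishing-returns inequality term by term, and re-telescope the right-hand side into $f(R\cap B \mid A)$ --- in effect re-deriving inline the set-level fact $f(T \mid X) \le f(T \mid Y)$ for $Y \subseteq X$ that the paper invokes in one step. Your version is more elementary, using only the definition of submodularity as stated in the paper; the paper's is shorter. One small observation: both arguments lean on monotonicity to handle elements of $R$ outside $A \cup B$ (yours in the step $f(R\cap B \mid A) \le f(R \mid A)$, the paper's in its very first equality, which is only an inequality unless $R_1 \subseteq A\cup B$), even though the lemma is stated for ``any submodular function''; this is harmless since $f$ is monotone throughout the paper, but worth flagging if you ever quote the lemma outside that context.
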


 \begin{proof}
     Define $R_2 := A \cap R$, and $R_1 := R \setminus A = R \setminus R_2$. We have
 \begin{align}
       f(A \cup B) - f(A \cup (B \setminus R))
     & =  f(A \cup B) - f((A \cup B) \setminus R_1) \nonumber \\
     & =  f(R_1\ |\ (A \cup B) \setminus R_1) \nonumber\\
     & \le  f(R_1\ |\ (A \setminus R_1)) \label{eq:ABR_1} \\
     & =  f(R_1\ |\ A) \label{eq:ABR_2} \\
     & =  f(R_1 \cup R_2\ |\ A) \label{eq:ABR_3} \\
     & =  f(R\ |\ A) \nonumber ,
 \end{align}    
 where~\eqref{eq:ABR_1} follows from the submodularity of $f$, \eqref{eq:ABR_2} follows since $A$ and $R_1$ are disjoint, and \eqref{eq:ABR_3} follows since $R_2 \subseteq A$.
 \end{proof}
 
The next lemma provides a simple lower bound on the maximum of two quantities; it is stated formally since it will be used on multiple occasions.
 
  \begin{lemma}
    \label{lemma:A-B-max}
    For any set function $f$, sets $A,B$, and constant $\alpha > 0$, we have
    \begin{equation}
        \label{eq:A-B-max_1}
         \max \lbrace f(A), f(B) - \alpha f(A) \rbrace \geq \left( \frac{1}{1+\alpha} \right) f(B),
    \end{equation}
    and
    \begin{equation}
        \label{eq:A-B-max_2}
         \max \lbrace \alpha f(A), f(B) - f(A) \rbrace \geq \left( \frac{\alpha}{1+\alpha} \right) f(B).
    \end{equation}

 \end{lemma}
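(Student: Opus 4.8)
\textbf{Proof proposal for Lemma~\ref{lemma:A-B-max}.}

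The plan is to prove each of the two inequalities by a direct case analysis on which of the two quantities inside the $\max$ is larger. These are elementary algebraic facts that hold for arbitrary real numbers $f(A), f(B)$ (no set-theoretic or submodularity input is needed), so the only work is bookkeeping with the constant $\alpha > 0$.

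For~\eqref{eq:A-B-max_1}: write $a = f(A)$, $b = f(B)$. If $a \ge \frac{b}{1+\alpha}$, then the left side is at least $f(A) = a \ge \frac{b}{1+\alpha}$ and we are done. Otherwise $a < \frac{b}{1+\alpha}$, i.e. $(1+\alpha) a < b$, equivalently $\alpha a < b - a$, so that $b - \alpha a > b - (b-a) \cdot \tfrac{\alpha}{\,\cdot\,}$... more cleanly: from $\alpha a < b - a$ we get $b - \alpha a > b - (b - a) = a$ is the wrong direction, so instead rearrange $(1+\alpha)a < b$ directly to $b - \alpha a > b - \frac{\alpha b}{1+\alpha} = \frac{b}{1+\alpha}$; hence $f(B) - \alpha f(A) = b - \alpha a > \frac{b}{1+\alpha}$, and the left side is again at least $\frac{b}{1+\alpha}$. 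The inequality~\eqref{eq:A-B-max_2} is proved by the same two-case argument: if $\alpha f(A) \ge \frac{\alpha}{1+\alpha} f(B)$ we are done immediately; otherwise $f(A) < \frac{1}{1+\alpha} f(B)$, whence $f(B) - f(A) > f(B) - \frac{1}{1+\alpha} f(B) = \frac{\alpha}{1+\alpha} f(B)$.

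There is essentially no obstacle here; the statement is a convexity/averaging triviality (the worst case is exactly when the two arguments of the $\max$ are equal, which pins down $f(A)$ and gives the claimed bound with equality). The only minor care needed is to keep the direction of the inequalities straight when dividing by $1+\alpha > 0$ and to note that~\eqref{eq:A-B-max_1} and~\eqref{eq:A-B-max_2} are really the same statement with the roles of the ``$\alpha$-scaled'' term swapped, so one could even deduce the second from the first by relabeling. I would present both cases explicitly for clarity, since the lemma is invoked several times downstream with different instantiations of $A$, $B$, and $\alpha$.
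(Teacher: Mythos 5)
Your proof is correct and is essentially the paper's argument made explicit: the paper observes that the maximum of an increasing and a decreasing function of $f(A)$ is minimized at the crossing point $f(A) = \frac{1}{1+\alpha} f(B)$, and your two-case split at exactly that threshold is the rigorous rendering of the same observation (and, as you note, \eqref{eq:A-B-max_2} follows by the identical argument or by relabeling). The only blemish is the visible false start mid-derivation for \eqref{eq:A-B-max_1}, but the corrected chain $a < \tfrac{b}{1+\alpha} \Rightarrow b - \alpha a > b - \tfrac{\alpha b}{1+\alpha} = \tfrac{b}{1+\alpha}$ is right and needs no non-negativity of $f$.
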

 \begin{proof}
    Starting with \eqref{eq:A-B-max_1}, we observe that one term is increasing in $f(A)$ and the other is decreasing in $f(A)$.  Hence, the maximum over all possible $f(A)$ is achieved when the two terms are equal, i.e., $f(A)  =  \frac{1}{1+\alpha} f(B)$.  We obtain \eqref{eq:A-B-max_2} via the same argument.
\end{proof}

The following lemma relates the function values associated with two buckets formed by $\RALG$, denoted by $X$ and $Y$.  It is stated with respect to an arbitrary set $E_Y$, but when we apply the lemma, this will correspond to the elements of $Y$ that are removed by the adversary.

 \begin{lemma}\label{lemma:base-case}
     Under the setup of Theorem \ref{thm:main}, let $X$ and $Y$ be buckets of $\RALG$ such that $Y$ is constructed at a later time than $X$. For any set $E_Y \subseteq Y$, we have
     \[
         f(X \cup (Y \setminus E_Y)) \ge \frac{1}{1 + \alpha}f(Y),
     \]
     and
     \begin{equation}\label{eq:base-case-Ey-X}
         f(E_Y\ |\ X) \le \alpha f(X),
     \end{equation}
     where $\alpha = \beta \tfrac{|E_Y|}{|X|}$.
 \end{lemma}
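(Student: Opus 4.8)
The plan is to prove the second inequality \eqref{eq:base-case-Ey-X} first, and then obtain the first inequality as an easy consequence. Both rest on a structural observation about Algorithm~\ref{algorithm:ralg}: since $Y$ is created at a later time than $X$, the ground set $T_Y = V \setminus S_{0,\mathrm{prev},Y}$ on which $Y$ is constructed is contained in the ground set $T_X = V \setminus S_{0,\mathrm{prev},X}$ on which $X$ is constructed, because $S_{0,\mathrm{prev}}$ only grows over the course of the algorithm. Moreover $X \subseteq S_{0,\mathrm{prev},Y}$, so $X \cap Y = \emptyset$. Hence any $E_Y \subseteq Y$ satisfies $E_Y \subseteq T_X \setminus X$; that is, $E_Y$ consists of elements lying in the ground set from which $X$ was produced but not selected into $X$. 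This is exactly the situation in which Proposition~\ref{proposition:beta-nice-main-rule} is applicable to each element of $E_Y$ relative to the set $X = \cA(T_X)$ (which has output size $|X|$).

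Given this, the proof of \eqref{eq:base-case-Ey-X} is submodularity bookkeeping. Writing $E_Y = \{e_1,\dots,e_m\}$ with $m = |E_Y|$, a telescoping sum together with submodularity gives $f(E_Y|X) = \sum_{t=1}^m f(e_t \,|\, X \cup \{e_1,\dots,e_{t-1}\}) \le \sum_{t=1}^m f(e_t|X)$. Since each $e_t \in T_X \setminus X$, Proposition~\ref{proposition:beta-nice-main-rule} yields $f(e_t|X) \le \beta f(X)/|X|$, and summing over the $m$ terms gives $f(E_Y|X) \le \beta \tfrac{|E_Y|}{|X|} f(X) = \alpha f(X)$, which is \eqref{eq:base-case-Ey-X}.

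For the first inequality, set $A := X \cup (Y \setminus E_Y)$. Applying Lemma~\ref{lemma:A-B-R} with $(A,B,R) \mapsto (X, Y, E_Y)$ gives $f(X \cup Y) - f(A) \le f(E_Y|X)$, while monotonicity gives $f(Y) \le f(X \cup Y)$. Chaining these with \eqref{eq:base-case-Ey-X}, and then using monotonicity once more to replace $f(X)$ by $f(A) \ge f(X)$ (valid since $X \subseteq A$), we get
\[
f(Y) - f(A) \le f(X\cup Y) - f(A) \le f(E_Y|X) \le \alpha f(X) \le \alpha f(A).
\]
Rearranging yields $f(A) \ge \tfrac{1}{1+\alpha} f(Y)$, as claimed.

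The only genuinely non-routine point is the structural claim in the first paragraph: one must track how the argument $V \setminus S_0$ passed to $\cA$ shrinks as Algorithm~\ref{algorithm:ralg} proceeds, and confirm that a later bucket lies inside the ground set of an earlier one while being disjoint from it, so that Proposition~\ref{proposition:beta-nice-main-rule} legitimately applies to the elements of $E_Y$ relative to $X$. Everything downstream of that is a direct application of submodularity, monotonicity, and Lemma~\ref{lemma:A-B-R}.
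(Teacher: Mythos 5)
Your proof is correct and follows essentially the same route as the paper's: Proposition~\ref{proposition:beta-nice-main-rule} plus submodularity for \eqref{eq:base-case-Ey-X}, then monotonicity and Lemma~\ref{lemma:A-B-R} for the first inequality. The only (harmless) difference is at the end, where you replace $f(X)$ by $f(X\cup(Y\setminus E_Y))$ and rearrange directly, whereas the paper invokes the balancing bound of Lemma~\ref{lemma:A-B-max}; your explicit verification that $E_Y$ lies in the ground set from which $X$ was built is a point the paper leaves implicit.
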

 \begin{proof}
 Inequality~\eqref{eq:base-case-Ey-X} follows from the $\beta$-iterative property of $\cA$; specifically, we have from~\eqref{eq:beta-nice-2} that
 \[
    f(e|X) \leq \beta \frac{f(X)}{|X|},
 \]
 where $e$ is any element of the ground set that is neither in $X$ nor any bucket constructed before $X$.
 Hence, we can write
\[
    f(E_Y\ |\ X) \leq \sum_{e \in E_Y} f(e|X) \leq \beta \frac{|E_Y|}{|X|} f(X) = \alpha f(X),
\]
where the first inequality is by submodularity.  This proves \eqref{eq:base-case-Ey-X}.

Next, we write
 \begin{align}
     f(Y) - f(X \cup (Y \setminus E_Y))
     &\le  f(X \cup Y) - f(X \cup (Y \setminus E_Y)) \label{eq:Y-X0} \\
     &\le  f(E_Y\ |\ X), \label{eq:Y-X}
 \end{align}
 where \eqref{eq:Y-X0} is by monotonicity, and \eqref{eq:Y-X} is by Lemma~\ref{lemma:A-B-R} with $A = X$, $B = Y$, and $R = E_Y$.

Combining  \eqref{eq:base-case-Ey-X} and \eqref{eq:Y-X}, together with the fact that $f(X \cup (Y \setminus E_Y)) \geq  f(X)$ (by monotonicity), we have
 \begin{align}
     f(X \cup (Y \setminus E_Y)) &\geq \max \left\lbrace f(X), f(Y) - \alpha f(X) \right\rbrace \nonumber \\
     &\ge \frac{1}{1 + \alpha} f(Y), \label{eq:X-Y-Ey-1}
 \end{align}
 where~\eqref{eq:X-Y-Ey-1} follows from \eqref{eq:A-B-max_1}.
 \end{proof}
 
Finally, we provide a lemma that will later be used to take two bounds that are known regarding the previously-constructed buckets, and use them to infer bounds regarding the next bucket. 

 \begin{lemma}\label{lemma:final}
     Under the setup of Theorem \ref{thm:main}, let $Y$ and $Z$ be buckets of $\RALG$ such that $Z$ is constructed at a later time than $Y$, and let $E_Y \subseteq Y$ and $E_Z \subseteq Z$ be arbitrary sets.  Moreover, let $X$ be a set (not necessarily a bucket) such that
     \begin{equation}\label{lemma:final-condition1}
         f((Y \setminus E_Y) \cup X) \ge \frac{1}{1 + \alpha} f(Y),
     \end{equation}
     and
     \begin{equation}\label{lemma:final-condition2}
         f(E_Y\ |\ X) \le \alpha f(X).
     \end{equation}
     Then, we have
     \begin{equation}\label{eq:bound-on-Ez}
         f(E_Z\ |\ (Y \setminus E_Y) \cup X) \le \alphan f((Y \setminus E_Y) \cup X),
     \end{equation}
     and
     \begin{equation}\label{eq:bound-on-Z--Ez}
         f((Z \setminus E_Z) \cup (Y \setminus E_Y) \cup X) \ge \frac{1}{1 + \alphan} f(Z),
     \end{equation}
     where 
     \begin{equation}
         \alphan = \beta \frac{|E_Z|}{|Y|} (1 + \alpha) + \alpha.
     \end{equation}
 \end{lemma}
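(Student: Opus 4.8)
\textbf{Proof plan for Lemma~\ref{lemma:final}.}
The plan is to establish the two conclusions in order, using the two hypotheses \eqref{lemma:final-condition1} and \eqref{lemma:final-condition2} together with the $\beta$-iterative property of $\cA$ (via Proposition~\ref{proposition:beta-nice-main-rule}), submodularity, and monotonicity. I will write $W := (Y \setminus E_Y) \cup X$ for brevity, so the goals are $f(E_Z \mid W) \le \alphan f(W)$ and $f((Z \setminus E_Z) \cup W) \ge \tfrac{1}{1+\alphan} f(Z)$.

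First I would prove \eqref{eq:bound-on-Ez}. The elements of $E_Z \subseteq Z$ are all constructed after $Y$, hence after every bucket preceding $Y$ as well; in particular each $e \in E_Z$ lies outside $Y$ and outside all buckets constructed before $Y$, so Proposition~\ref{proposition:beta-nice-main-rule} (in the form $f(e \mid Y) \le \beta f(Y)/|Y|$, as used in the proof of Lemma~\ref{lemma:base-case}) applies with the ``$T$'' being the ground set available when $Y$ was built. Summing over $e \in E_Z$ and using submodularity gives $f(E_Z \mid Y) \le \beta \tfrac{|E_Z|}{|Y|} f(Y)$. Now I want to transfer this from conditioning on $Y$ to conditioning on $W$. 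By submodularity, since $Y \setminus E_Y \subseteq W$, we have $f(E_Z \mid W) \le f(E_Z \mid Y \setminus E_Y)$; and $f(E_Z \mid Y\setminus E_Y)$ can be related to $f(E_Z \mid Y)$ — actually the cleaner route is: $f(E_Z \mid W) \le f(E_Z \mid Y \setminus E_Y)$, and then bound $f(E_Z \mid Y \setminus E_Y) \le f(E_Z \mid \emptyset)$? No — monotonicity of marginals goes the wrong way. Instead I use submodularity directly on the larger conditioning set is not available; so the right move is $f(E_Z \mid W) \le f(E_Z \mid Y \setminus E_Y)$ and then observe $f(E_Z \mid Y\setminus E_Y) \le f(E_Z \cup E_Y \mid Y \setminus E_Y) = f(E_Z \mid Y) + f(E_Y \mid Y\setminus E_Y) - $ no. Cleanest: $f(E_Z \mid Y\setminus E_Y) \le f(E_Z\mid \text{(something)})$... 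Let me instead bound $f(E_Z\mid W)\le f(E_Z\mid Y\setminus E_Y)$ by submodularity (valid since $Y\setminus E_Y\subseteq W$), and separately note $f(E_Z \mid Y\setminus E_Y) \le f(E_Y \cup E_Z \mid Y \setminus E_Y) - f(E_Y\mid Y\setminus E_Y)+f(E_Y\mid Y\setminus E_Y)$, i.e. we may add and subtract. The genuinely useful identity is $f(E_Z\mid Y\setminus E_Y) + f(Y\setminus E_Y) = f((Y\setminus E_Y)\cup E_Z)$; comparing with $f(Y\cup E_Z)=f(E_Z\mid Y)+f(Y)$ and using submodularity $f((Y\setminus E_Y)\cup E_Z)+f(Y)\ge f(Y\cup E_Z)+f(Y\setminus E_Y)$ yields $f(E_Z\mid Y\setminus E_Y)\ge f(E_Z\mid Y)$, which is the wrong direction. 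So instead I should bound via $W$ directly using the hypothesis: the point of hypotheses \eqref{lemma:final-condition1}–\eqref{lemma:final-condition2} is precisely that $f(W)$ is comparable to $f(Y)$. Concretely, combine $f(E_Z\mid W)\le \beta\tfrac{|E_Z|}{|Y|}f(Y)$ — obtained from $f(E_Z\mid W)\le f(E_Z\mid Y\setminus E_Y)$ and then bounding the latter by $\beta\tfrac{|E_Z|}{|Y|}f(Y)$ using Proposition~\ref{proposition:beta-nice-main-rule} applied with conditioning replaced by the superset $Y$ via submodularity in the correct direction, namely $f(e\mid Y\setminus E_Y)\ge f(e\mid Y)$ is the wrong way, so one uses $f(E_Z\mid W)\le f(E_Z\mid \emptyset)$? — with hypothesis \eqref{lemma:final-condition1} rearranged as $f(Y)\le (1+\alpha)f(W)$. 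This gives $f(E_Z\mid W)\le \beta\tfrac{|E_Z|}{|Y|}(1+\alpha)f(W)$, which is $\le\alphan f(W)$ since $\alphan=\beta\tfrac{|E_Z|}{|Y|}(1+\alpha)+\alpha\ge\beta\tfrac{|E_Z|}{|Y|}(1+\alpha)$.

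For \eqref{eq:bound-on-Z--Ez}, I mimic the end of the proof of Lemma~\ref{lemma:base-case}. By monotonicity, $f(Z)\le f(Z\cup W)$, and by Lemma~\ref{lemma:A-B-R} with $A=W$, $B=Z$, $R=E_Z$, we get $f(Z\cup W)-f((Z\setminus E_Z)\cup W)\le f(E_Z\mid W)\le \alphan f(W)$, hence $f((Z\setminus E_Z)\cup W)\ge f(Z)-\alphan f(W)$. Also $f((Z\setminus E_Z)\cup W)\ge f(W)$ by monotonicity. Therefore $f((Z\setminus E_Z)\cup W)\ge \max\{f(W),\,f(Z)-\alphan f(W)\}\ge \tfrac{1}{1+\alphan}f(Z)$ by \eqref{eq:A-B-max_1} of Lemma~\ref{lemma:A-B-max} with the roles $A=W$, $B=Z$, constant $\alphan$. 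This completes the argument.

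\textbf{Main obstacle.} The delicate point, as the tangled reasoning above indicates, is the first conclusion \eqref{eq:bound-on-Ez}: one must correctly apply Proposition~\ref{proposition:beta-nice-main-rule} to the elements of $E_Z$ and then route the bound through $f(Y)$ and the hypothesis $f(Y)\le(1+\alpha)f(W)$, being careful about the direction of the submodularity inequalities when changing the conditioning set between $W$, $Y\setminus E_Y$, and $Y$. The cleanest correct chain is: $f(E_Z\mid W)\le f(E_Z\mid Y\setminus E_Y)$ is false in that direction; rather one shows $f(E_Z\mid W)\le \sum_{e\in E_Z} f(e\mid W)\le \sum_{e\in E_Z} f(e\mid \text{bucket-prefix before }Z)\le \beta\tfrac{|E_Z|}{|Y|}f(Y)$ where the middle step uses that $W$ contains part of that prefix — so the right bookkeeping is to apply Proposition~\ref{proposition:beta-nice-main-rule} with $Y$ itself (noting $f(e\mid W)\le f(e\mid Y\setminus E_Y)$ requires $Y\setminus E_Y\subseteq W$, which holds, giving the inequality in the needed direction only if we instead bound $f(e\mid W)$ by $f(e\mid$ a subset of $W)$ — not available). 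The resolution the authors surely intend: bound $f(E_Z\mid W)\le f(E_Z\mid Y\setminus E_Y)+\big(f(Y)-f(Y\setminus E_Y)\big)$ is also not clean. I expect the actual proof applies Lemma~\ref{lemma:A-B-R}-type manipulations plus \eqref{lemma:final-condition2} to handle the $E_Y$ discrepancy; getting that accounting exactly right, and seeing where the ``$+\alpha$'' term in $\alphan$ comes from, is the crux.
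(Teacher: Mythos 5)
Your treatment of the second conclusion \eqref{eq:bound-on-Z--Ez} is correct and is exactly the paper's argument: Lemma~\ref{lemma:A-B-R} with $A=W$, $B=Z$, $R=E_Z$ gives $f((Z\setminus E_Z)\cup W)\ge f(Z)-f(E_Z\mid W)$, and combining with monotonicity and \eqref{eq:A-B-max_1} yields the claim once \eqref{eq:bound-on-Ez} is in hand. The problem is that \eqref{eq:bound-on-Ez} itself is never established. Every chain you attempt for it runs into the wrong-direction submodularity step you yourself flag: since $Y\not\subseteq W$, you cannot pass from $f(E_Z\mid W)$ to $f(E_Z\mid Y)$ (where Proposition~\ref{proposition:beta-nice-main-rule} applies), and $f(E_Z\mid Y\setminus E_Y)\ge f(E_Z\mid Y)$ goes the wrong way. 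Your final summary then asserts $f(E_Z\mid W)\le\beta\tfrac{|E_Z|}{|Y|}(1+\alpha)f(W)$, but this intermediate bound is not proved anywhere (and is not what the paper proves either --- the paper's bound carries an extra $\alpha f(W)$, which is precisely why $\alphan$ contains the additive $\alpha$). You explicitly concede in your closing paragraph that you do not see where the $+\alpha$ comes from; that accounting is the crux of the lemma, so this is a genuine gap rather than a presentational one.

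The missing idea is a detour through the full set $X\cup Y$: write $f(E_Z\mid W)=f(W\cup E_Z)-f(W)\le f(X\cup Y\cup E_Z)-f(W)$ by monotonicity, and split the right-hand side as $f(E_Z\mid X\cup Y)+\bigl(f(X\cup Y)-f(W)\bigr)$. The first piece is $\le f(E_Z\mid Y)$ by submodularity (now the conditioning set $X\cup Y$ contains $Y$, so the inequality points the right way), hence $\le\beta\tfrac{|E_Z|}{|Y|}f(Y)\le\beta\tfrac{|E_Z|}{|Y|}(1+\alpha)f(W)$ by the second part of Lemma~\ref{lemma:base-case} and hypothesis \eqref{lemma:final-condition1}. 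The second piece is $f(X\cup Y)-f((Y\setminus E_Y)\cup X)\le f(E_Y\mid X)\le\alpha f(X)\le\alpha f(W)$, by Lemma~\ref{lemma:A-B-R} with $A=X$, $B=Y$, $R=E_Y$ (followed by one submodularity step to drop the conditioning from $W$ to $X$), hypothesis \eqref{lemma:final-condition2}, and monotonicity; this is the source of the $+\alpha$ in $\alphan$. Summing the two pieces gives exactly $\alphan f(W)$, after which your argument for \eqref{eq:bound-on-Z--Ez} goes through unchanged.
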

 \begin{proof}
     We first prove~\eqref{eq:bound-on-Ez}:
     \begin{align}
         f(E_Z\ |\ (Y \setminus E_Y) \cup X)
         & = f((Y \setminus E_Y) \cup X \cup E_Z) - f((Y \setminus E_Y) \cup X) \nonumber \\
         & \le  f(X \cup Y \cup E_Z) - f((Y \setminus E_Y) \cup X) \label{eq:Ez-bound-1} \\
         & = f(E_Z\ |\ X \cup Y) + f(X \cup Y) - f((Y \setminus E_Y) \cup X) \nonumber \\
         & \le  f(E_Z\ |\ Y) + f(X \cup Y) - f((Y \setminus E_Y) \cup X) \label{eq:Ez-bound-2} \\
         & \le  \beta \frac{|E_Z|}{|Y|} f(Y) + f(X \cup Y) - f((Y \setminus E_Y) \cup X) \label{eq:Ez-bound-3}\\
         & \le \beta \frac{|E_Z|}{|Y|} (1 + \alpha) f((Y \setminus E_Y) \cup X) + f(X \cup Y) - f((Y \setminus E_Y) \cup X) \label{eq:Ez-bound-4}  \\
         & \le \beta \frac{|E_Z|}{|Y|} (1 + \alpha) f((Y \setminus E_Y) \cup X) + f(E_Y\ |\ (Y \setminus E_Y) \cup X) \label{eq:Ez-bound-5} \\
         & \le \beta  \frac{|E_Z|}{|Y|} (1 + \alpha) f((Y \setminus E_Y) \cup X) + f(E_Y\ |\ X) \label{eq:Ey-from-X-and-Y} \\
         & \le \beta \frac{|E_Z|}{|Y|} (1 + \alpha) f((Y \setminus E_Y) \cup X) + \alpha f(X)  \label{eq:Ez-bound-6} \\
         & \le  \beta \frac{|E_Z|}{|Y|} (1 + \alpha) f((Y \setminus E_Y) \cup X) + \alpha f((Y \setminus E_Y) \cup X) \label{eq:Ez-bound-7} \\
         & =  \left( \beta \frac{|E_Z|}{|Y|} (1 + \alpha) + \alpha \right) f((Y \setminus E_Y) \cup X).\label{eq:Ez-bound},
     \end{align}
    where:~\eqref{eq:Ez-bound-1} and~\eqref{eq:Ez-bound-2} follow by monotonicity and submodularity, respectively; \eqref{eq:Ez-bound-3} follows from the second part of Lemma~\ref{lemma:base-case}; \eqref{eq:Ez-bound-4} follows from \eqref{lemma:final-condition1};
    \eqref{eq:Ez-bound-5} is obtained by applying Lemma~\ref{lemma:A-B-R} for $A = X$, $B = Y$, and $R = E_Y$;
    \eqref{eq:Ey-from-X-and-Y} follows by submodularity;
    \eqref{eq:Ez-bound-6} follows from~\eqref{lemma:final-condition2};
    \eqref{eq:Ez-bound-7} follows by monotonicity.
    Finally, by defining $\alphan := \beta \frac{|E_Z|}{|Y|} (1 + \alpha) + \alpha$ in~\eqref{eq:Ez-bound} we establish the bound in~\eqref{eq:bound-on-Ez}.
    
     In the rest of the proof, we show that~\eqref{eq:bound-on-Z--Ez} holds as well. First, we have
     \begin{equation}\label{eq:f-Z-Ez}
         f((Z \setminus E_Z) \cup (Y \setminus E_Y) \cup X) \ge f(Z) - f(E_Z\ |\ (Y \setminus E_Y) \cup X)
     \end{equation}
     by Lemma~\ref{lemma:A-B-R} with $B=Z$, $R=E_Z$ and $A= (Y\setminus E_Y) \cup X$. Now we can use the derived bounds \eqref{eq:Ez-bound} and \eqref{eq:f-Z-Ez} to obtain
     \begin{align*}
        f((Z \setminus E_Z) \cup (Y \setminus E_Y) \cup X)
         & \ge f(Z) - f(E_Z\ |\ (Y \setminus E_Y) \cup X) \\
         & \ge f(Z) - \left(\beta \frac{|E_Z|}{|Y|} (1 + \alpha) + \alpha \right) f((Y \setminus E_Y) \cup X).
     \end{align*}
    Finally, we have
    \begin{align*}
        f((Z \setminus E_Z) \cup (Y \setminus E_Y) \cup X)
         & \geq  \max \left\lbrace f((Y \setminus E_Y) \cup X), f(Z) - \left( \beta \frac{|E_Z|}{|Y|} (1 + \alpha) + \alpha \right) f((Y \setminus E_Y) \cup X) \right\rbrace \\
         & \geq  \frac{1}{ 1 + \alphan} f(Z),
    \end{align*}
    where the last inequality follows from Lemma~\ref{lemma:A-B-R}.
 \end{proof}
 
  Observe that the results we obtain on $f(E_Z\ |\ (Y \setminus E_Y) \cup X)$ and on $f((Z \setminus E_Z) \cup (Y \setminus E_Y) \cup X)$ in Lemma~\ref{lemma:final} are of the same form as the pre-conditions of the lemma.  This will allow us to apply the lemma recursively.
% Let $\alpha_j \le \tfrac{2 j - 1}{\eta}$ be the approximation guarantee after applying Lemma~\ref{lemma:final} $j$ times. Let $|E_Z| / |Y| \le 1/\eta$ whenever we apply the lemma. Then, we have
% \[
%     \alpha_{j + 1} \le \tfrac{2 j + 1}{\eta},
% \]
% as long as $2 j - 1 \le \eta / 2$. From the lemma we have
% \begin{eqnarray}
%     \alpha_{j + 1} & = & \frac{|E_Z|}{|Y|} \frac{1}{1 - \alpha_j} + \alpha_j \nonumber \\
%     & \le & \frac{1}{\eta} \frac{1}{1 - \frac{2 j - 1}{\eta}} + \frac{2 j - 1}{\eta} \nonumber \\
%     & \le & \frac{1}{\eta} \frac{1}{1 - \frac{1}{2}} + \frac{2 j - 1}{\eta} \label{eq:approximate-2j-1} \\
%     & = & \frac{2}{\eta} + \frac{2 j - 1}{\eta} \nonumber \\
%     & = & \frac{2 j + 1}{\eta}. \nonumber
% \end{eqnarray}
% where to get~\eqref{eq:approximate-2j-1} we have used $2 j - 1 \le \eta / 2$, also implying $(2 j - 1) / \eta \le 1/2$.

% \todo{Right now we are not guiding reader through the analysis. It is not clear why we want this recursive application, why we are collecting buckets, etc. It should be written.}
\subsection{Characterizing the Adversary}

Let $E$ denote a set of elements removed by an adversary, where $|E| \le \tau$. Within $S_0$, $\RALG$ constructs $\lceil \log \tau \rceil + 1$ partitions. %where by partition $i$ we refer to all the sets $B_i$ constructed during the course of the algorithm. 
Each partition $i \in \lbrace 0, \ldots, \lceil \log \tau \rceil \rbrace$ consists of $\lceil \tau / 2^i \rceil$ buckets, each of size $ 2^i \eta $, where $\eta \in \mathbb{N}$ will be specified later. We let $B$ denote a generic bucket, and define $E_B$ to be all the elements removed from this bucket, i.e. $E_B = B \cap E$.

The following lemma identifies a bucket in each partition for which not too many elements are removed.

\begin{lemma}
\label{lemma:E-size}
    Under the setup of Theorem \ref{thm:main}, suppose that an adversary removes a set $E$ of size at most $\tau$ from the set $S$ constructed by $\RALG$.  Then for each partition $i$, there exists a bucket $B_i$ such that $|E_{B_i}| \leq 2^i$, i.e., at most $2^i$ elements are removed from this bucket.
\end{lemma}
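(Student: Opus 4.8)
The plan is to prove Lemma~\ref{lemma:E-size} by a simple counting / averaging (pigeonhole) argument applied separately within each partition $i$. Fix a partition index $i \in \{0, \dotsc, \ceillogtau\}$. By construction, partition $i$ consists of $\lceil \tau / 2^i \rceil$ disjoint buckets, each of size $2^i \eta$. The sets $E_{B}$ for the buckets $B$ in partition $i$ are pairwise disjoint (since the buckets are disjoint) and all are subsets of $E$, so $\sum_{B \text{ in partition } i} |E_B| \le |E| \le \tau$.

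The key step is then the pigeonhole inequality: if all $\lceil \tau / 2^i \rceil$ buckets in partition $i$ had $|E_B| \ge 2^i + 1$, the total number of removed elements in that partition would be at least $(2^i + 1)\lceil \tau/2^i \rceil > 2^i \cdot (\tau / 2^i) = \tau$, contradicting $\sum_B |E_B| \le \tau$. Hence at least one bucket $B_i$ in partition $i$ satisfies $|E_{B_i}| \le 2^i$. Since $i$ was arbitrary, this produces the desired bucket in every partition. One should double-check the boundary arithmetic: using $\lceil \tau/2^i \rceil \ge \tau/2^i$ (with equality not needed), $(2^i+1)\lceil \tau/2^i\rceil \ge (2^i+1)\tau/2^i = \tau + \tau/2^i > \tau$ since $\tau \ge 1$, which is strict, so the contradiction is genuine.

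I do not anticipate any real obstacle here — the only things to be careful about are (i) making explicit that the $E_B$'s within a fixed partition are disjoint (so their sizes sum, rather than just being individually bounded), and (ii) handling the ceilings correctly so the strict inequality goes through for all $i$ including $i = \ceillogtau$, where the bucket count could be as small as $1$ but then $|E_B| \le |E| \le \tau \le 2^{\ceillogtau}$ directly. The argument is essentially the averaging bound ``the minimum is at most the average'': the average number of removed elements per bucket in partition $i$ is at most $\tau / \lceil \tau/2^i \rceil \le 2^i$, so some bucket achieves at most this average, and since $|E_{B_i}|$ is an integer this gives $|E_{B_i}| \le \lfloor 2^i \rfloor = 2^i$.
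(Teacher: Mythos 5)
Your proof is correct and takes essentially the same approach as the paper's: a pigeonhole/contradiction argument showing that if every bucket in partition $i$ lost more than $2^i$ elements, the adversary's total budget would exceed $\tau$. If anything, you handle the ceiling arithmetic more carefully than the paper (which writes $2^i \lceil \tau/2^i \rceil = \tau$ where $\ge \tau$ is what is meant), but the substance is identical.
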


\begin{proof}
 Towards contradiction, assume that this is not the case, i.e., assume $|E_{B_i}| > 2^i$ for every bucket of the $i$-th partition. As the number of buckets in partition $i$ is $\lceil \tau / 2^i  \rceil$, this implies that the adversary has to spend a budget of
\[
    |E| \ge 2^i |E_{B_i}| > 2^i \lceil \tau / 2^i \rceil = \tau,
\]
which is in contradiction with $|E| \leq \tau$.
\end{proof}

%Let $B_{\ceillogtau}$ be the last bucket formed in $S_0$ and by construction we have $|B_{\ceillogtau}| = 2^\ceillogtau \eta$.\todo{Why is it important how many elements that bucket has?} 
%We will show that no matter which elements adversary removes from $S_0$, we can always have a set of elements $R \subseteq S_0 \setminus E$ such that its utility is close to the utility of $B_{\ceillogtau}$, i.e. 
%\begin{equation}
%    f(R) \geq (1 - c) f\left(B_\ceillogtau\right)
%    \label{eq:goal}
%\end{equation}
%for some $c \in [0,1)$. Hence, our goal is to find a small $c$ for which~\eqref{eq:goal} holds.  

We consider $B_0, \ldots, B_{\lceil \log{\tau} \rceil}$ as above, and show that even in the worst case, $f\left(\bigcup_{i = 0}^{\lceil \log{\tau} \rceil} \left( B_i \setminus E_{B_i} \right)\right)$ is almost as large as $f\left(B_{\lceil \log{\tau} \rceil} \right)$ for appropriately set $\eta$. To achieve this, we apply Lemma~\ref{lemma:final} multiple times, as illustrated in the following lemma. We henceforth write $\eta_h := \eta/2$ for brevity.
\begin{lemma}\label{lemma:bound-on-B-log-tau}
    Under the setup of Theorem \ref{thm:main}, suppose that an adversary removes a set $E$ of size at most $\tau$ from the set $S$ constructed by $\RALG$, and let $B_0, \cdots, B_{\ceillogtau}$ be buckets such that $|E_{B_i}| \leq 2^i$ for each $i \in \lbrace 1, \cdots \ceillogtau \rbrace$ (\emph{cf.}, Lemma \ref{lemma:E-size}). Then,
    \begin{equation}\label{eq:bound-on-B-log-tau}
        f\left(\bigcup_{i = 0}^{\ceillogtau} \left( B_i \setminus E_{B_i} \right)\right) \ge \left(1 - \frac{1}{1 + \frac{1}{\alpha}} \right) f\left(B_{\ceillogtau} \right) = \frac{1}{1 +\alpha}f\left(B_{\ceillogtau} \right),
    \end{equation}
    and
    \begin{equation} \label{eq:second_cond}
        f\left(E_{B_\ceillogtau}\ \given[\Big] \ \bigcup_{i = 0}^{\ceillogtau - 1} \left( B_i \setminus E_{B_i} \right)\right) \le \alpha f\left(\bigcup_{i = 0}^{\ceillogtau - 1} \left( B_i \setminus E_{B_i} \right)\right),
    \end{equation}
    for some
    \begin{equation}\label{eq:alpha_log-tau-bound}
        \alpha \le \beta^2 \frac{(1 + \eta_h)^{\ceillogtau} - \eta_h^{\ceillogtau}}{\eta_h^{\ceillogtau}}.
    \end{equation}
\end{lemma}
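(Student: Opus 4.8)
The plan is to prove the two asserted relations by induction on the prefix length $j \in \{1,\dots,\ceillogtau\}$, accumulating one bucket at a time, and then to read off the lemma by specializing to $j=\ceillogtau$. Write $X_j$ for the accumulated set $\bigcup_{i=0}^{j}(B_i\setminus E_{B_i})$ (for ease of exposition one may take $E_{B_0}=\emptyset$; since at most $2^0=1$ element is removed from $B_0$, the general case differs only by a lower-order term that can be folded into the $\alpha_j$ below). I would maintain the pair of invariants
\[
    f(X_j) \ \ge\ \tfrac{1}{1+\alpha_j}\,f(B_j) \qquad\text{and}\qquad f\big(E_{B_j}\mid X_{j-1}\big) \ \le\ \alpha_j\, f(X_{j-1}),
\]
for a nondecreasing sequence $\alpha_1\le\alpha_2\le\cdots$ given by the recursion below. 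Since $1-\tfrac{1}{1+1/\alpha}=\tfrac{1}{1+\alpha}$, the $j=\ceillogtau$ instance of these invariants is exactly \eqref{eq:bound-on-B-log-tau}--\eqref{eq:second_cond} with $\alpha:=\alpha_{\ceillogtau}$.

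The base case $j=1$ is a direct application of Lemma~\ref{lemma:base-case} with $X=B_0$, $Y=B_1$: since $|E_{B_1}|\le 2^1=2$ and $|B_0|=2^0\eta=\eta$, it yields the two invariants with $\alpha_1=\beta\tfrac{|E_{B_1}|}{|B_0|}\le\tfrac{\beta}{\eta_h}$, recalling $\eta_h=\eta/2$. For the inductive step I would apply Lemma~\ref{lemma:final} with $Y=B_j$, $Z=B_{j+1}$, $E_Y=E_{B_j}$, $E_Z=E_{B_{j+1}}$, and -- crucially -- $X=X_{j-1}$, the accumulation two stages back: then its preconditions \eqref{lemma:final-condition1}--\eqref{lemma:final-condition2} are precisely the stage-$j$ invariants, and its conclusions \eqref{eq:bound-on-Ez}--\eqref{eq:bound-on-Z--Ez} are precisely the stage-$(j+1)$ invariants for $X_{j+1}=(B_{j+1}\setminus E_{B_{j+1}})\cup X_j$, with
\[
    \alpha_{j+1} \ =\ \beta\tfrac{|E_{B_{j+1}}|}{|B_j|}(1+\alpha_j)+\alpha_j \ \le\ \tfrac{\beta}{\eta_h}(1+\alpha_j)+\alpha_j,
\]
using $|E_{B_{j+1}}|\le 2^{j+1}$ and $|B_j|=2^j\eta$, so the ratio is at most $2/\eta=1/\eta_h$ \emph{uniformly} in $j$. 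The closing remark after Lemma~\ref{lemma:final} -- that its conclusions have the same shape as its hypotheses -- is exactly what makes this chaining legitimate.

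It remains to solve the recursion. Writing $r:=1+1/\eta_h$, I claim $\alpha_j\le\beta^2(r^j-1)=\beta^2\tfrac{(1+\eta_h)^j-\eta_h^j}{\eta_h^j}$, which is \eqref{eq:alpha_log-tau-bound} at $j=\ceillogtau$. This follows by a short secondary induction: the base case reduces to $\beta\le\beta^2$; the step, after substituting the hypothesis into $\alpha_{j+1}\le\tfrac{\beta}{\eta_h}(1+\alpha_j)+\alpha_j$ and cancelling the common term $\beta^2(r^j-1)$, reduces to $1+\beta^2(r^j-1)\le\beta r^j$, i.e. (for $\beta>1$) to $r^j\le 1+1/\beta$. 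The hypotheses $\eta\ge4(\log k+1)$ and $\tau\le k$ force $\eta_h\ge 2(\log k+1)\ge 2\ceillogtau$, hence $r^j\le(1+1/\eta_h)^{\ceillogtau}\le e^{\ceillogtau/\eta_h}\le e^{1/2}<2$, which is at most $1+1/\beta$ for the moderate $\beta$ of interest ($\beta=1$ for \GREEDY, $\beta=1/(1-\epsilon)$ for \THRESH). The same estimate shows $\alpha_{\ceillogtau}=O(\beta^2\ceillogtau/\eta)$, which is the order needed in the rest of the proof of Theorem~\ref{thm:main}.

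I expect the main obstacle to be bookkeeping rather than any single inequality: one must keep straight that the ``$X$'' fed to Lemma~\ref{lemma:final} at stage $j$ is the union through $B_{j-1}$ (so that both of its hypotheses are supplied by the previous step), and one must check that the geometric growth of the bucket sizes makes the per-stage blow-up a \emph{constant} factor $1+O(\beta/\eta_h)$ independent of $j$ -- it is precisely this uniformity that keeps $\alpha_{\ceillogtau}$ of order $\ceillogtau/\eta$ rather than exponential in $\ceillogtau$.
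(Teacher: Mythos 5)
Your overall strategy is exactly the paper's: maintain the two invariants by induction over prefixes of buckets, seed the induction with Lemma~\ref{lemma:base-case}, chain with Lemma~\ref{lemma:final} (feeding in as ``$X$'' the accumulation two stages back), and then solve the resulting recursion $\alpha_{j+1}\le\frac{\beta}{\eta_h}(1+\alpha_j)+\alpha_j$. Two points deserve attention, however.

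First, your parenthetical dismissal of $E_{B_0}\neq\emptyset$ is not quite enough. Lemma~\ref{lemma:base-case} needs $X$ to be an actual bucket output by $\cA$ (its proof invokes the $\beta$-iterative property of the run that produced $X$), so you cannot simply run the base case with $X=B_0\setminus E_{B_0}$ and ``fold the difference into $\alpha_1$.'' The paper instead uses a clean dichotomy: partition $0$ contains $\tau$ buckets, so either some bucket there is untouched (take that as $B_0$, so $E_{B_0}=\emptyset$ and your base case goes through verbatim), or every partition-$0$ bucket loses an element, in which case $E$ is exhausted inside partition $0$, all $E_{B_i}$ with $i\ge 1$ are empty, and both claims of the lemma are trivial. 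You should adopt this case split. Second, your closing of the secondary induction genuinely requires $r^j\le 1+1/\beta$, which restricts to $\beta\lesssim 1.54$; the lemma as stated is meant to hold for any $\beta$-iterative subroutine. The paper closes the recursion differently, by substituting the inductive bound and using the exact identity $\frac{1}{\eta_h}\bigl(1+\frac{(1+\eta_h)^{j-1}-\eta_h^{j-1}}{\eta_h^{j-1}}\bigr)+\frac{(1+\eta_h)^{j-1}-\eta_h^{j-1}}{\eta_h^{j-1}}=\frac{(1+\eta_h)^j-\eta_h^j}{\eta_h^j}$, pulling a single factor $\beta^2$ out front rather than comparing $\beta^2 r^{j+1}$ against $\beta^2 r^j$ term by term. (To your credit, the paper's own step is also delicate here: it substitutes $\alpha_{j-1}\le\beta\,c_{j-1}$ where the inductive hypothesis only supplies $\beta^2 c_{j-1}$, so your explicit condition is in some sense a more honest accounting of the same difficulty. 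A clean fix for arbitrary $\beta$ is to solve the recursion directly, giving $\alpha_j\le(1+\beta/\eta_h)^j-1$, and derive the $O(\beta\, j/\eta)$-type bound from that under a correspondingly adjusted condition on $\eta$.) Everything else --- the identification of the recursion coefficients via $|E_{B_{j+1}}|/|B_j|\le 1/\eta_h$ uniformly in $j$, and the observation that this uniformity is what keeps $\alpha_{\ceillogtau}=O(\beta^2\ceillogtau/\eta)$ --- matches the paper.
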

\begin{proof}
    In what follows, we focus on the case where there exists some bucket $B_0$ in partition $i=0$ such that $B_0 \setminus E_{B_0} = B_0$.  If this is not true, then $E$ must be contained entirely within this partition, since it contains $\tau$ buckets. As a result, (i) we trivially obtain \eqref{eq:bound-on-B-log-tau} even when $\alpha$ is replaced by zero, since the union on the left-hand side contains $B_{\ceillogtau}$; (ii) \eqref{eq:second_cond} becomes trivial since the left-hand side is zero is a result of $E_{B_\ceillogtau} = \emptyset$.

    We proceed by induction. Namely, we show that
    \begin{equation}\label{eq:inductive-definition}
        f\left(\bigcup_{i = 0}^j \left( B_i \setminus E_{B_i} \right)\right) \ge \left(1 - \frac{1}{1 + \frac{1}{\alpha_j}} \right) f(B_j) = \frac{1}{1 + \alpha_j} f(B_j),
    \end{equation}
    and
    \begin{equation}\label{eq:inductive-definition-Ej}
        f\left(E_{B_j}\ \given[\Big] \ \bigcup_{i = 0}^{j - 1} \left( B_i \setminus E_{B_i} \right)\right) \le \alpha_j f\left(\bigcup_{i = 0}^{j - 1} \left( B_i \setminus E_{B_i} \right)\right),
    \end{equation}
    for every $j \ge 1$, where
    \begin{equation}\label{eq:alpha_j-bound}
        \alpha_j \le \beta^2 \frac{(1 + \eta_h)^j - \eta_h^j}{\eta_h^j}.
    \end{equation}
    Upon showing this, the lemma is concluded by setting $j = \ceillogtau$.
    
    \paragraph{Base case $j = 1$.}
    In the case that $j=1$, taking into account that $E_{B_0} = \emptyset$, we observe from~\eqref{eq:inductive-definition} that our goal is to bound $f(B_0 \cup (B_1 \setminus E_{B_1}))$. Applying Lemma~\ref{lemma:base-case} with $X = B_0$, $Y = B_1$, and  $E_Y = E_{B_1}$, we obtain
    \[
        f(B_0 \cup (B_1 \setminus E_{B_1})) \ge \frac{1}{1 + \alpha_1} f(B_1),
    \]
    and
    \[
        f(E_{B_1}\ |\ B_0) \le \alpha_1 f(B_0),
    \]
    where $\alpha_1 = \beta \tfrac{|E_{B_1}|}{|B_0|}$. We have $|B_0| = \eta$, while $|E_{B_1}| \le 2$ by assumption. Hence, we can upper bound $\alpha_1$ and rewrite as
    \[
        \alpha_1 \le \beta \frac{2}{\eta} = \beta \frac{1}{\eta_h} = \beta\frac{(1 + \eta_h) - \eta_h}{\eta_h} \leq \beta^2\frac{(1 + \eta_h) - \eta_h}{\eta_h} ,
    \]
    where the last inequality follows since $\beta \ge 1$ by definition.
    
    \paragraph{Inductive step.}
    Fix $j \ge 2$.  Assuming that the inductive hypothesis holds for $j - 1$, we want to show that it holds for $j$ as well. 
    
    We write
    \[
        f\left(\bigcup_{i = 0}^j \left( B_i \setminus E_{B_i} \right)\right) = f\left(\left( \bigcup_{i = 0}^{j - 1} \left( B_i \setminus E_{B_i} \right) \right) \cup (B_j \setminus E_{B_j})\right),
    \]
    and apply Lemma~\ref{lemma:final} with $X = \bigcup_{i = 0}^{j - 2} \left( B_i \setminus E_{B_i} \right)$, $Y = B_{j - 1}$, $E_Y = E_{B_{j - 1}}$, $Z = B_j$, and $E_Z = E_{B_j}$. Note that the conditions~\eqref{lemma:final-condition1} and \eqref{lemma:final-condition2} of Lemma~\ref{lemma:final} are satisfied by the inductive hypothesis. Hence, we conclude that~\eqref{eq:inductive-definition} and \eqref{eq:inductive-definition-Ej} hold with
    \[
        \alpha_j = \beta \frac{|E_{B_j}|}{|B_{j - 1}|} (1 + \alpha_{j - 1}) + \alpha_{j - 1}.
    \]
    It remains to show that~\eqref{eq:alpha_j-bound} holds for $\alpha_j$, assuming it holds for $\alpha_{j - 1}$. We have
    \begin{align}
        \alpha_j & = \beta \frac{|E_{B_j}|}{|B_{j-1}|} (1 + \alpha_{j - 1}) + \alpha_{j - 1} \nonumber \\
        & \le \beta \frac{1}{\eta_h} \left(1 + \beta\frac{(1 + \eta_h)^{j - 1} - \eta_h^{j - 1}}{\eta_h^{j - 1}}\right) + \beta\frac{(1 + \eta_h)^{j - 1} - \eta_h^{j - 1}}{\eta_h^{j - 1}} \label{eq:alpha-j-1} \\
        & \leq \beta^2 \left( \frac{1}{\eta_h} \left(1 + \frac{(1 + \eta_h)^{j - 1} - \eta_h^{j - 1}}{\eta_h^{j - 1}}\right) + \frac{(1 + \eta_h)^{j - 1} - \eta_h^{j - 1}}{\eta_h^{j - 1}} \right) \label{eq:alpha-j-2}\\
        & = \beta^2 \left(\frac{1}{\eta_h} \frac{(1 + \eta_h)^{j - 1}}{\eta_h^{j - 1}} + \frac{(1 + \eta_h)^{j - 1} - \eta_h^{j - 1}}{\eta_h^{j - 1}}\right) \nonumber \\
        & = \beta^2 \left(\frac{(1 + \eta_h)^{j - 1}}{\eta_h^j} + \frac{\eta_h (1 + \eta_h)^{j - 1} - \eta_h^j}{\eta_h^j} \right)\nonumber \\
        & = \beta^2 \frac{(1 + \eta_h)^j - \eta_h^j}{\eta_h^j}, \nonumber
    \end{align}
    where~\eqref{eq:alpha-j-1} follows from~\eqref{eq:alpha_j-bound} and the fact that 
    \[
        \beta \frac{|E_{B_j}|}{|B_{j-1}|} \leq \beta \frac{2^j}{2^{j-1}\eta} = \beta \frac{2}{\eta} = \beta\frac{1}{\eta_h},
    \]
    by $|E_{B_j}| \leq 2^j$ and $|B_{j-1}|= 2^{j-1} \eta$; and \eqref{eq:alpha-j-2} follows since $\beta \ge 1$.
\end{proof}
Inequality \eqref{eq:alpha_j-bound} provides an upper bound on $\alpha_j$, but it is not immediately clear how the bound varies with $j$. The following lemma provides a more compact form.
\begin{lemma}\label{lemma:simple-bound-on-B-log-tau}
    Under the setup of Lemma \ref{lemma:bound-on-B-log-tau}, we have for $2 \ceillogtau \le \eta_h$ that
%    \begin{equation}\label{eq:bucket-union-bound}
%        f\left(\bigcup_{i = 0}^{\ceillogtau} \left( B_i \setminus E_{B_i} \right)\right) \ge \left(1 - \frac{3 \beta^2 \ceillogtau}{3 \beta^2 \ceillogtau + \eta} \right) f\left(B_{\ceillogtau} \right).
%    \end{equation}
    \begin{equation}
        \label{eq:alpha-upper-bound-final}
        \alpha_j \le 3\beta^2 \frac{j}{\eta}
    \end{equation}
\end{lemma}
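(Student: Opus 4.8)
The plan is to start from the bound \eqref{eq:alpha_j-bound} already established in Lemma~\ref{lemma:bound-on-B-log-tau}, namely $\alpha_j \le \beta^2 \frac{(1+\eta_h)^j - \eta_h^j}{\eta_h^j}$, and rewrite its right-hand side in a form suited to an exponential estimate. Dividing numerator and denominator by $\eta_h^j$ gives $\frac{(1+\eta_h)^j - \eta_h^j}{\eta_h^j} = \big(1 + \tfrac{1}{\eta_h}\big)^j - 1$, so, recalling $\eta = 2\eta_h$, it suffices to prove the scalar inequality $\big(1 + \tfrac{1}{\eta_h}\big)^j - 1 \le \tfrac{3}{2}\cdot\tfrac{j}{\eta_h}$; multiplying by $\beta^2$ then yields $\alpha_j \le 3\beta^2 j/\eta$.

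Next I would apply $1 + x \le e^x$ with $x = 1/\eta_h$ to get $\big(1 + \tfrac{1}{\eta_h}\big)^j \le e^{j/\eta_h}$, and set $t := j/\eta_h$. Since the bound is only needed for $j \le \ceillogtau$ and the hypothesis of the lemma is $2\ceillogtau \le \eta_h$, we have $t \le \ceillogtau/\eta_h \le 1/2$. Thus everything reduces to the elementary one-variable fact that $e^t - 1 \le \tfrac{3}{2}t$ for $0 \le t \le 1/2$.

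This last inequality can be obtained in more than one way; for instance, by convexity of $e^t$ on $[0,\tfrac12]$ one has $e^t \le 1 + 2(e^{1/2}-1)t$, and $2(e^{1/2}-1) < 1.3 < \tfrac32$; alternatively, using $(m+1)! \ge 2^m$ one writes $e^t - 1 = t\sum_{m \ge 0}\tfrac{t^m}{(m+1)!} \le \tfrac{t}{1 - t/2} \le \tfrac{4t}{3} \le \tfrac{3t}{2}$. Either way, chaining the estimates gives $\alpha_j \le \beta^2(e^{t}-1) \le \tfrac{3}{2}\beta^2 t = 3\beta^2\tfrac{j}{\eta}$, which is \eqref{eq:alpha-upper-bound-final}.

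I do not expect any real obstacle: the entire content is the algebraic rearrangement of \eqref{eq:alpha_j-bound} together with a standard bound on $e^t-1$. The only point requiring a bit of care is the constant — one must use the hypothesis $2\ceillogtau \le \eta_h$ (equivalently $t \le \tfrac12$) and not a weaker range, since for larger $t$ the factor $3$ would no longer be enough.
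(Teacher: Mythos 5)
Your proof is correct, and it takes a genuinely different (and cleaner) route than the paper's. You rewrite the bound from \eqref{eq:alpha_j-bound} as $\beta^2\big((1+\tfrac{1}{\eta_h})^j-1\big)$, pass to $e^{j/\eta_h}-1$ via $1+x\le e^x$, and then invoke the elementary fact that $e^t-1\le\tfrac32 t$ on $[0,\tfrac12]$, where $t=j/\eta_h\le\tfrac12$ is exactly what the hypothesis $2\ceillogtau\le\eta_h$ guarantees for $j\le\ceillogtau$; both of your justifications of the last inequality (the convexity/chord argument with $2(e^{1/2}-1)<\tfrac32$, and the series bound via $(m+1)!\ge 2^m$) check out. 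The paper instead expands $(1+\eta_h)^j$ binomially, isolates the leading term $j/\eta_h$, bounds $\binom{j}{i}\le\tfrac12 j^{j-i}$ for $j-i\ge 2$, and sums the resulting geometric series under the same condition $2j\le\eta_h$; this forces it to treat $j=1$ as a separate case (since $\prod_{t=1}^{j-i}t\ge 2$ needs $j-i\ge 2$), whereas your argument handles all $j\ge 1$ uniformly. Both proofs use the hypothesis in the same essential way — to keep the ratio $j/\eta_h$ below $\tfrac12$ so the constant $3$ suffices — so the trade-off is purely one of presentation: yours is shorter and avoids the case split, while the paper's stays entirely within finite algebraic manipulations of the binomial expansion.
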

\begin{proof}
    We unfold the right-hand side of \eqref{eq:alpha_j-bound} in order to express it in a simpler way. First, consider $j = 1$. From \eqref{eq:alpha_j-bound} we obtain
		%\[
			$\alpha_1 \le 2 \beta^2 \frac{1}{\eta},$
		%\]
		as required.  For $j \ge 2$, we obtain the following:
    \begin{align}
        \beta^{-2}\alpha_j & \le  \frac{(1 + \eta_h)^{j} - \eta_h^{j}}{\eta_h^{j}} \nonumber \\
        & =  \sum_{i = 0}^{j - 1} \binom{j}{i}\frac{\eta_h^i}{\eta_h^{j}} \label{eq:lower-bound-alpha-0} \\
        & =  \frac{j}{\eta_h} + \sum_{i = 0}^{j - 2} \binom{j}{i} \frac{\eta_h^i}{\eta_h^{j}} \label{eq:lower-bound-alpha} \\
        & = \frac{j}{\eta_h} + \sum_{i = 0}^{j - 2} \left( \frac{\prod_{t = 1}^{j - i} (j - t + 1)}{\prod_{t = 1}^{j - i} t} \frac{\eta_h^i}{\eta_h^{j}} \right) \nonumber \\
        & \le  \frac{j}{\eta_h} + \frac{1}{2} \sum_{i = 0}^{j - 2} j^{j - i} \frac{\eta_h^i}{\eta_h^{j}} \label{eq:lower-bound-alpha-2} \\
        & =  \frac{j}{\eta_h} + \frac{1}{2} \sum_{i = 0}^{j - 2} \left( \frac{j}{\eta_h} \right)^{j - i} \nonumber \\
        & =  \frac{j}{\eta_h} + \frac{1}{2} \left(-1 - \frac{j}{\eta_h} + \sum_{i = 0}^{j} \left( \frac{j}{\eta_h} \right)^{j - i} \right) \nonumber,
    \end{align}
    where \eqref{eq:lower-bound-alpha-0} is a standard summation identity, and~\eqref{eq:lower-bound-alpha-2} follows from $\prod_{t = 1}^{j - i} (j - t + 1) \le j^{j - i}$ and $\prod_{t = 1}^{j - i} t \ge 2$ for $j - i \ge 2$.
    Next, explicitly evaluating the summation of the last equality, we obtain
    \begin{align}
        \beta^{-2} \alpha_j &\le  \frac{j}{\eta_h} + \frac{1}{2} \left(-1 - \frac{j}{\eta_h} + \frac{1 - \left( \frac{j}{\eta_h} \right)^{j + 1}}{1 - \frac{j}{\eta_h}} \right) \nonumber \\
        & \le  \frac{j}{\eta_h} + \frac{1}{2} \left(-1 - \frac{j}{\eta_h} + \frac{1}{1 - \frac{j}{\eta_h}} \right) \nonumber \\
        & =  \frac{j}{\eta_h} + \frac{1}{2} \left(\frac{\left(\frac{j}{\eta_h} \right)^2}{1 - \frac{j}{\eta_h}} \right) \label{eq:lower-bound-alpha-3} \\
        & =  \frac{j}{\eta_h} + \frac{j}{2 \eta_h} \left(\frac{\frac{j}{\eta_h}}{1 - \frac{j}{\eta_h}} \right), \label{eq:lower-bound-alpha-4}
    \end{align}
    where \eqref{eq:lower-bound-alpha-3} follows from $(-a-1)(-a+1) = a^2 - 1$ with $a = j/\eta_h$.
    
    Next, observe that if $j/\eta_h \le 1/2$, or equivalently
    \begin{equation}\label{eq:bound-on-eta}
        2 j \le \eta_h,
    \end{equation}
    then we can weaken \eqref{eq:lower-bound-alpha-4} to
    \begin{equation}
        \beta^{-2}\alpha_j \le \frac{j}{\eta_h} + \frac{j}{2 \eta_h} = \frac{3}{2} \frac{j}{\eta_h} = 3 \frac{j}{\eta},
    \end{equation}
    which yields \eqref{eq:alpha-upper-bound-final}.

    % Substituting the obtained bound~\eqref{eq:alpha-upper-bound-final} for $\alpha$ in~\eqref{eq:bound-on-B-log-tau}, and setting $j = \ceillogtau$ in \eqref{eq:bound-on-eta}--\eqref{eq:alpha-upper-bound-final}, we conclude the lemma.
   
%   Also, observe that from \eqref{eq:lower-bound-alpha} we have that the right-hand side of \eqref{eq:alpha_j-bound} is lower-bounded by $\tfrac{j}{\eta_h} = 2 \tfrac{j}{\eta}$. Putting that together with \eqref{eq:alpha-upper-bound-final}, as long as \eqref{eq:bound-on-eta} holds we have
%    \[
%        \alpha_j \le \frac{(1 + \eta_h)^{j} - \eta_h^{j}}{\eta_h^{j}} \in \left[2 \frac{j}{\eta},  3 \frac{j}{\eta} \right].
%    \]
\end{proof}
% Adversary can spend its whole budget, up to $\tau - 1$, on elements in $S_0$. No matter what adversary does, there always remains at least one bucket $B_i$ in every partition $i$ that has at most $2^i$ elements destroyed. This is easily seen from the fact that if the adversary wants to destroy $2^i$ elements from every bucket in partition $i$, it needs to spend budget $\tau$. 
% We're interested in bounding the following term
% \[
%     f \left(B_0 \cup (B_{1} \setminus E_{B_{1}}) \cup \ldots \cup (B_{\log \tau} \setminus E_{B_{\log \tau}})\right).
% \]
% where for every $i$, $E_{B_{i}}$ ($|E_{B_{i}}| < 2^i$) denotes the elements removed by the adversary. Note, that there always exists at least one bucket $B_0$ in the partition $i=0$ untouched by the adversary.

% First, observe that $B_0 \setminus E_{B_0} = B_0$. As a base step, by using Lemma~\ref{lemma:base-case} we bound
% \begin{eqnarray*}
%     & & f\left( (B_0 \setminus E_{B_0}) \cup \left(B_{1} \setminus E_{B_1}\right)\right) \\
%     & = & f(B_0 \cup (B_{1} \setminus E_{B_1})) \\
%     & \ge & \left( 1 - \frac{|B_{1} \cap E_{B_1}|}{|B_0|} \right) f(B_{1})\\
%     & \ge & \left( 1 - \frac{2}{\eta} \right)f(B_{1})
% \end{eqnarray*}

% \todo{We also need to bound the number of elements in $S_0$. Do it here.}

\subsection{Completing the Proof of Theorem \ref{thm:main}}

We now prove Theorem \ref{thm:main} in several steps.  Throughout, we define $\mu$ to be a constant such that $f(E_1\ |\ (S \setminus E)) = \mu f(S_1)$ holds, and we write $E_0 := E_S^* \cap S_0$, $E_1 := E_S^* \cap S_1$, and $E_{B_i} := E_S^* \cap B_i$, where $E_S^*$ is defined in \eqref{eq:E}. We also make use of the following lemma characterizing the optimal adversary.  The proof is straightforward, and can be found in Lemma 2 of \cite{orlin2016robust}
\begin{lemma} \emph{\cite{orlin2016robust}} \label{lemma:orlin}
Under the setup of Theorem \ref{thm:main}, we have for all $X \subset V$ with $|X| \leq \tau$ that
\[
    f(\opt(k, V, \tau) \setminus E^*_{\opt(k, V, \tau)}) \leq f(\opt(k-\tau, V \setminus X)).
\]
\end{lemma}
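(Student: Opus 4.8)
The plan is to produce an explicit size-$(k-\tau)$ subset of $V \setminus X$ whose function value is at least the robust optimum, and then to appeal to the optimality of $\opt(k-\tau, V \setminus X)$. Write $O := \opt(k, V, \tau)$, so that $|O| = k$ and $E^*_O$ attains $\min_{E \subseteq O,\, |E| \le \tau} f(O \setminus E)$. The guiding idea is that the adversary confronting $O$ may spend its budget of $\tau$ deletions in any way it wishes, and in particular it can first delete every element of $O$ that also lies in $X$ and then spend whatever budget remains on arbitrary further elements of $O$.

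Concretely, I would first set $m := |O \cap X|$, noting $m \le |X| \le \tau$, and pick an arbitrary $W \subseteq O \setminus X$ with $|W| = \tau - m$; such a $W$ exists because $|O \setminus X| = k - m \ge k - \tau \ge \tau - m$, using $k \ge \tau$. Then $E := (O \cap X) \cup W$ satisfies $E \subseteq O$ and $|E| = \tau$, so it is admissible in the minimization defining $E^*_O$, which gives
\[
    f(O \setminus E^*_O) \;\le\; f(O \setminus E) \;=\; f\big((O \setminus X) \setminus W\big).
\]
The second step is to observe that $(O \setminus X) \setminus W \subseteq V \setminus X$ and has cardinality exactly $k - m - (\tau - m) = k - \tau$, so it is a feasible competitor in the maximization defining $\opt(k-\tau, V \setminus X)$; hence $f\big((O \setminus X) \setminus W\big) \le f(\opt(k-\tau, V \setminus X))$. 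Chaining the two inequalities yields the claim.

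There is essentially no obstacle here: the only thing requiring care is the cardinality bookkeeping, namely checking that deleting all of $O \cap X$ still leaves enough budget to trim the survivors down to size exactly $k - \tau$ — which is precisely where $k \ge \tau$ enters (and is satisfied with a large margin under the hypotheses of Theorem~\ref{thm:main}). Notably, neither monotonicity nor submodularity of $f$ is needed; only in the degenerate case $k = \tau$ would one invoke non-negativity, since then the right-hand side is $f(\emptyset) = 0$, but that case does not arise under the theorem's hypotheses, where $k > \tau$.
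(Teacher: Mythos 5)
Your proof is correct, and since the paper itself omits the argument (deferring to Lemma~2 of \cite{orlin2016robust}), there is nothing to contrast it with; the padding construction you use --- delete $O \cap X$ plus arbitrary extra elements of $O \setminus X$ to exhaust the budget of exactly $\tau$, then compare the surviving $(k-\tau)$-set against $\opt(k-\tau, V\setminus X)$ --- is precisely the standard argument from that reference. The cardinality bookkeeping and the observation that only $k \ge \tau$ (and neither monotonicity nor submodularity) is needed are both accurate.
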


{\bf Initial lower bounds:} We start by providing three lower bounds on $f(S \setminus E^*_S)$. First, we observe that $f(S \setminus E^*_S) \geq f(S_0 \setminus E_0)$ and $f(S \setminus E^*_S) \geq f\left(\bigcup_{i = 0}^{\ceillogtau} \left( B_i \setminus E_{B_i} \right)\right)$. We also have
\begin{align}
    f(S \setminus E)
    &= f(S) - f(S) + f(S \setminus E) \nonumber \\
    &= f(S_0 \cup S_1) + f(S\setminus E_0) - f(S \setminus E_0) - f(S)  + f(S \setminus E) \label{eq:fSE_pf_2}\\
    &= f(S_1) + f(S_0\ |\ S_1) + f(S\setminus E_0) - f(S) - f(S \setminus E_0) + f(S \setminus E)  \nonumber \\
    &= f(S_1) + f(S_0\ |\ (S \setminus S_0)) + f(S\setminus E_0) - f(E_0 \cup (S \setminus E_0)) - f(S \setminus E_0) + f(S \setminus E) \label{eq:fSE_pf_4} \\
    &= f(S_1) + f(S_0\ |\ (S \setminus S_0)) - f(E_0\ |\ (S \setminus E_0)) - f(S\setminus E_0) + f(S \setminus E) \nonumber \\
    & =  f(S_1) + f(S_0\ |\ (S \setminus S_0)) - f(E_0\ |\ (S \setminus E_0)) - f(E_1 \cup (S \setminus E)) + f(S \setminus E) \label{eq:fSE_pf_6} \\
    & =  f(S_1) + f(S_0\ |\ (S \setminus S_0)) - f(E_0\ |\ (S \setminus E_0)) - f(E_1\ |\ S \setminus E) \nonumber \\
    & =  f(S_1) - f(E_1\ |\ S \setminus E) + f(S_0\ |\ (S \setminus S_0)) - f(E_0\ |\ (S \setminus E_0)) \nonumber \\
    & \ge  (1 - \mu) f(S_1), \label{eq:fSE_pf_9} 
\end{align}
where \eqref{eq:fSE_pf_2} and \eqref{eq:fSE_pf_4} follow from $S = S_0 \cup S_1$, \eqref{eq:fSE_pf_6} follows from $E^*_S = E_0 \cup E_1$, and \eqref{eq:fSE_pf_9} follows from $f(S_0\ |\ (S \setminus S_0)) - f(E_0\ |\ (S \setminus E_0)) \geq 0$ (due to $E_0 \subseteq S_0$ and $S \setminus S_0 \subseteq S \setminus E_0$), along with the definition of $\mu$.

By combining the above three bounds on $f(S\setminus E^*_S)$, we obtain
\begin{equation}
\label{eq:max_lower_bound}
 f(S \setminus E^*_S)
     \ge  \max\left\{ f(S_0 \setminus E_0), (1 - \mu) f(S_1), f\left(\bigcup_{i = 0}^{\ceillogtau} \left( B_i \setminus E_{B_i} \right)\right) \right\}.
\end{equation}

We proceed by further bounding these terms.

{\bf Bounding the first term in \eqref{eq:max_lower_bound}:} Defining $S_0' := \opt(k - \tau, V \setminus E_0) \cap (S_0 \setminus E_0)$ and $X := \opt(k - \tau, V \setminus E_0) \setminus S_0'$, we have
\begin{align}
    f(S_0 \setminus E_0) + f(\opt(k - \tau, V \setminus S_0)) 
    & \ge  f(S_0') + f(X) \label{eq:bound1_1} \\
    & \ge f(\opt(k - \tau, V \setminus E_0)) \label{eq:bound1_2} \\
    & \ge  f(\opt(k, V, \tau) \setminus E_{\opt(k, V, \tau)}^*), \label{eq:bound1_3}
\end{align}
where \eqref{eq:bound1_1} follows from monotonicity, i.e. $(S_0 \setminus E_0) \subseteq S_0'$ and $(V \setminus S_0) \subseteq (V \setminus E_0)$, \eqref{eq:bound1_2} follows from the fact that $\opt (k - \tau, V \setminus E_0) = S_0' \cup X$ and submodularity,\footnote{The submodularity property can equivalently be written as $f(A) + f(B) \ge f(A \cup B) + f(A \cap B)$.} and \eqref{eq:bound1_3} follows from Lemma~\ref{lemma:orlin} and $|E_0| \leq \tau$. We rewrite \eqref{eq:bound1_3} as
\begin{equation}
    f(S_0 \setminus E_0) \ge f(\opt(k, V, \tau) \setminus E_{\opt(k, V, \tau)}^*) - f(\opt(k - \tau, V \setminus S_0)).
\label{eq:S_0_minus_E_0}
\end{equation}

{\bf Bounding the second term in \eqref{eq:max_lower_bound}:} Note that $S_1$ is obtained by using $\cA$ that satisfies the $\beta$-iterative property on the set $V \setminus S_0$, and its size is $|S_1| = k - |S_0|$. Hence, from Lemma~\ref{lemma:l-k-beta} with $k - \tau$ in place of $k$, we have
\begin{align}\label{eq:bound_S1}
    f(S_1) \ge \left(1 - e^{-\frac{k - |S_0|}{\beta(k - \tau)}} \right) f(\opt(k - \tau, V \setminus S_0)).
\end{align}

{\bf Bounding the third term in \eqref{eq:max_lower_bound}:} We can view $S_1$ as a large bucket created by our algorithm after creating the buckets in $S_0$. Therefore, we can apply Lemma~\ref{lemma:final} with $X = \bigcup_{i = 0}^{\ceillogtau - 1} \left( B_i \setminus E_{B_i} \right)$, $Y = B_\ceillogtau$, $Z = S_1$, $E_Y = E_S^* \cap Y$, and $E_Z = E_1$. Conditions \eqref{lemma:final-condition1} and \eqref{lemma:final-condition2} needed to apply Lemma~\ref{lemma:final} are provided by Lemma~\ref{lemma:bound-on-B-log-tau}. From Lemma~\ref{lemma:final}, we obtain the following with $\alpha$ as in \eqref{eq:alpha_log-tau-bound}:
\begin{equation}
    \label{eq:bound-on-E1}
    f \left(E_1\ \given[\Bigg] \ 
    \left(\bigcup_{i = 0}^{\ceillogtau} \left( B_i \setminus E_{B_i} \right)\right) \cup (S_1 \setminus E_1)\right)
     \le  \left(\beta \frac{|E_1|}{|B_\ceillogtau |} (1 + \alpha) + \alpha \right) f \left(\bigcup_{i = 0}^{\ceillogtau} \left( B_i \setminus E_{B_i} \right)\right).
\end{equation}
Furthermore, noting that the assumption $\eta \geq 4 (\log k + 1)$ implies $2 \ceillogtau \le \eta_h$, we can upper-bound $\alpha$ as in Lemma~\ref{lemma:simple-bound-on-B-log-tau} by~\eqref{eq:alpha-upper-bound-final} for $j = \ceillogtau$. Also, we have $ \beta \tfrac{|E_1|}{|B_\ceillogtau|} \le \beta \tfrac{\tau}{2^\ceillogtau \eta} \leq \tfrac{\beta}{\eta}$. Putting these together, we upper bound \eqref{eq:bound-on-E1} as follows:
\begin{align*}
    f \left(E_1\ \given[\Bigg] \ \left(\bigcup_{i = 0}^{\ceillogtau} \left( B_i \setminus E_{B_i} \right)\right) \cup (S_1 \setminus E_1)\right)
    & \le \left( \frac{\beta}{\eta} \left(1 + \frac{3 \beta^2 \ceillogtau}{\eta} \right) + \frac{3 \beta^2 \ceillogtau}{\eta} \right) f \left(\bigcup_{i = 0}^{\ceillogtau} \left( B_i \setminus E_{B_i} \right)\right) \\
    & \le \frac{5\beta^3 \ceillogtau}{\eta} f \left(\bigcup_{i = 0}^{\ceillogtau} \left( B_i \setminus E_{B_i} \right)\right),
\end{align*}
where we have used $\eta \ge 1$ and $\ceillogtau \ge 1$ (since $\tau \ge 2$ by assumption). We rewrite the previous equation as
\begin{align}
    f\left(\bigcup_{i = 0}^{\ceillogtau} \left( B_i \setminus E_{B_i} \right)\right)
    & \ge  \frac{\eta}{5 \beta^3 \ceillogtau} f \left(E_1\ \given[\Bigg] \ \left(\bigcup_{i = 0}^{\ceillogtau} \left( B_i \setminus E_{B_i} \right)\right) \cup (S_1 \setminus E_1)\right) \nonumber\\
    & \ge  \frac{\eta}{5\beta^3 \ceillogtau} f(E_1\ |\ (S \setminus E))  \label{eq:third_bound0}\\
    & =  \frac{\eta}{5\beta^3 \ceillogtau} \mu f(S_1), \label{eq:third_bound}
\end{align}
where \eqref{eq:third_bound0} follows from submodularity, and \eqref{eq:third_bound} follows from the definition of $\mu$.

{\bf Combining the bounds:} Returning to~\eqref{eq:max_lower_bound}, we have
\begin{align}
    f(S \setminus E^*_S)
    & \ge  \max\left\{ f(S_0 \setminus E_0), (1 - \mu) f(S_1), f\left(\bigcup_{i = 0}^{\ceillogtau} \left( B_i \setminus E_{B_i} \right)\right)\right\} \nonumber\\
    & \ge  \max\left\{ f(S_0 \setminus E_0), (1 - \mu) f(S_1), \frac{\eta}{5 \beta^3 \ceillogtau} \mu f(S_1)\right\} \label{eq:S-E_S_1}\\
    & \ge  \max\{ f(\opt(k, V, \tau) \setminus E_{\opt(k, V, \tau)}^*) - f(\opt(k - \tau, V \setminus S_0)), \nonumber\\
    & \qquad (1 - \mu) \left(1 - e^{-\frac{k - |S_0|}{\beta(k - \tau)}} \right) f(\opt(k - \tau, V \setminus S_0)), \nonumber\\
    & \qquad \frac{\eta}{5 \beta^3 \ceillogtau} \mu \left(1 - e^{-\frac{k - |S_0|}{\beta(k - \tau)}} \right) f(\opt(k - \tau, V \setminus S_0))\} \label{eq:S-E_S_2} \\
    & \ge  \max\{ f(\opt(k, V, \tau) \setminus E_{\opt(k, V, \tau)}^*) - f(\opt(k - \tau, V \setminus S_0)), \nonumber\\
    & \qquad \frac{\frac{\eta}{5 \beta^3 \ceillogtau}}{1 + \frac{\eta}{5 \beta^3 \ceillogtau}} \left(1 - e^{-\frac{k - |S_0|}{\beta(k - \tau)}} \right) f(\opt(k - \tau, V \setminus S_0))\} \label{eq:S-E_S_3}\\
    & = \max\{ f(\opt(k, V, \tau) \setminus E_{\opt(k, V, \tau)}^*) - f(\opt(k - \tau, V \setminus S_0)), \nonumber\\
    & \qquad \frac{\eta}{5 \beta^3 \ceillogtau + \eta} \left(1 - e^{-\frac{k - |S_0|}{\beta(k - \tau)}} \right) f(\opt(k - \tau, V \setminus S_0))\} \nonumber\\
    & \ge  \frac{\frac{\eta}{5 \beta^3\ceillogtau + \eta} \left(1 - e^{-\frac{k - |S_0|}{\beta(k - \tau)}} \right)}{1 + \frac{\eta}{5 \beta^3\ceillogtau + \eta} \left(1 - e^{-\frac{k - |S_0|}{\beta(k - \tau)}} \right)} f(\opt(k, V, \tau) \setminus E_{\opt(k, V, \tau)}^*) \label{eq:S-E_S_4},
\end{align}
where~\eqref{eq:S-E_S_1} follows from~\eqref{eq:third_bound},~\eqref{eq:S-E_S_2} follows from~\eqref{eq:S_0_minus_E_0} and~\eqref{eq:bound_S1},~\eqref{eq:S-E_S_3} follows since $\max\{1 - \mu,c\mu\} \ge \frac{c}{1+c}$ analogously to \eqref{eq:A-B-max_1}, and \eqref{eq:S-E_S_4} follows from \eqref{eq:A-B-max_2}. Hence, we have established \eqref{eq:S-E_S_4}.

Turning to the permitted values of $\tau$, we have from Proposition~\ref{proposition:S_0_size} that
\[
    |S_0|\ \le \ 3\eta \tau (\log k + 2).
\]
For the choice of $\tau$ to yield valid set sizes, we only require $|S_0| \leq k$; hence, it suffices that
\begin{equation}
    \tau \leq \frac{k}{3\eta(\log k + 2)}.
\end{equation}

Finally, we consider the second claim of the lemma.  For $\tau \in o\big(\tfrac{k}{\eta(\log k)} \big)$ we have $|S_0| \in o(k)$. Furthermore, by setting $\eta \ge \log^2{k}$ (which satisfies the assumption $\eta \geq 4 (\log k + 1)$ for large $k$), we get $\tfrac{k - |S_0|}{\beta(k - \tau)} \to \beta^{-1}$ and $\tfrac{\eta}{5 \beta^3 \ceillogtau + \eta} \to 1$ as $k \to \infty$.  Hence, the constant factor converges to $\tfrac{1 - e^{-1/\beta}}{2 - e^{-1/\beta}}$, yielding \eqref{eq:main-theorem-k-infty}. In the case that \GREEDY is used as the subroutine, we have $\beta=1$, and hence the constant factor converges t $\tfrac{1 - e^{-1}}{2 - e^{-1}} \ge 0.387$. If $\THRESH$ is used, we have $\beta = \tfrac{1}{1-\epsilon}$, and hence the constant factor converges to $\tfrac{1 - e^{\epsilon - 1}}{2 - e^{\epsilon - 1}} \geq (1 - \epsilon) \tfrac{1 - e^{-1}}{2 - e^{-1}} \geq (1 - \epsilon)0.387$.

%\todo{Saturate (Krause et. al) takes a composition of many functions, e.g. $\binom{n}{\tau}$ many of them. Then he does a binary search for a robust minimizer over that composition, each time invoking GPC. GPC will make at least $n k$ oracle calls for some invocation, but possibly many more. So, it is safe to bound the number of oracle calls by $n k \binom{n}{\tau}$, where $n k$ is coming from GPC they use.}

\end{document}